\newcommand{\Real}{\mathbb R}
\newcommand{\Integer}{\mathbb{Z}_{\geq0}}
\newtheorem{assumption}{\textbf{Assumption}}
\newtheorem{remark}{\textbf{Remark}}
\newtheorem{theorem}{\textbf{Theorem}}
\newtheorem{problem}{\textbf{Problem}}
\newtheorem{lemma}{\textbf{Lemma}}
\title{Quadrupedal Locomotion via Event-Based Predictive Control and QP-Based Virtual Constraints}
\author{Kaveh Akbari Hamed, Jeeseop Kim, and Abhishek Pandala% <-this % stops a space
%\thanks{Manuscript received: April 4, 2019; Revised July 7, 2019; Accepted August 17, 2019.}%Use only for final RAL version
\thanks{*The work of K. Akbari Hamed is supported by the National Science Foundation (NSF) under Grant Numbers 1854898, 1906727, 1923216, and 1924617. The work of J. Kim and A. Pandala is supported by the NSF Grant Number 1854898. The content is solely the responsibility of the authors and does not necessarily represent the official views of the NSF.}% <-this % stops a space
\thanks{$^{1}$K. Akbari Hamed, J. Kim, and A. Pandala are with the Department of Mechanical Engineering, Virginia Tech, Blacksburg, VA 24061 USA {\tt\small kavehakbarihamed@vt.edu}, {\tt\small jeeseop@vt.edu} and {\tt\small agp19@vt.edu}}%
%\thanks{Digital Object Identifier (DOI): see top of this page.}%Use only for final RAL version.
}
\begin{document}

\maketitle

%%%%%%%%%%%%%%%%%%%%%%%%%%%%%%%%%%%%%%%%%%%%%%%%%%%%%%%%%%%%%%%%%%%%%%%%%%%%%%%%

\begin{abstract}
This paper aims to develop a hierarchical nonlinear control algorithm, based on model predictive control (MPC), quadratic programming (QP), and virtual constraints, to generate and stabilize locomotion patterns in a real-time manner for dynamical models of quadrupedal robots. The higher level of the proposed control scheme is developed based on an event-based MPC that computes the optimal center of mass (COM) trajectories for a reduced-order linear inverted pendulum (LIP) model subject to the feasibility of the net ground reaction force (GRF). The asymptotic stability of a desired target point for the reduced-order model under the event-based MPC approach is investigated. It is shown that the event-based nature of the proposed MPC approach can significantly reduce the computational burden associated with the real-time implementation of MPC techniques. To bridge the gap between reduced- and full-order models, QP-based virtual constraint controllers are developed at the lower level of the proposed control scheme to impose the full-order dynamics to track the optimal trajectories while having all individual GRFs in the friction cone. The analytical results of the paper are numerically confirmed on full-order simulation models of a 22 degree of freedom quadrupedal robot, Vision 60, that is augmented by a robotic manipulator. The paper numerically investigates the robustness of the proposed control algorithm against different contact models.
\end{abstract}

\begin{IEEEkeywords}
Legged Robots, Motion Control, Multi-Contact Whole-Body Motion Planning and Control
\end{IEEEkeywords}

%%%%%%%%%%%%%%%%%%%%%%%%%%%%%%%%%%%%%%%%%%%%%%%%%%%%%%%%%%%%%%%%%%%%%%%%%%%%%%%%

\vspace{-0.8em}
\section{INTRODUCTION}
\label{INTRODUCTION}
\vspace{-0.2em}

\IEEEPARstart{T}{he} overarching goal of this paper is to develop a hierarchical control algorithm, based on nonlinear control, model predictive control (MPC), and quadratic programming (QP), to generate and stabilize locomotion trajectories for complex dynamical models of quadrupedal robots in a real-time manner. The proposed approach employs a higher-level and event-based MPC at the beginning of each continuous-time domain (i.e., event) that generates optimal trajectories for a reduced-order linear inverted pendulum (LIP) model subject to the feasibility of the net ground reaction force (GRF). The stability of the system subject to event-based MPC is investigated to demonstrate that the MPC does not need to be solved at every time sample. This significantly reduces the computational burden associated with MPC-based path planning approaches of legged locomotion while guaranteeing stability. To reduce the difference between the reduced- and full-order models of locomotion, a QP-based nonlinear controller is solved at the lower level of the proposed approach to impose the full-order dynamics to track the optimal trajectories while keeping all individual GRFs feasible. It is shown that the developed control algorithm can address stable, robust, and aperiodic locomotion patterns in different directions with uncertainty in contact models.

\begin{figure}[t!]
\centering
\includegraphics[width=0.85\linewidth]{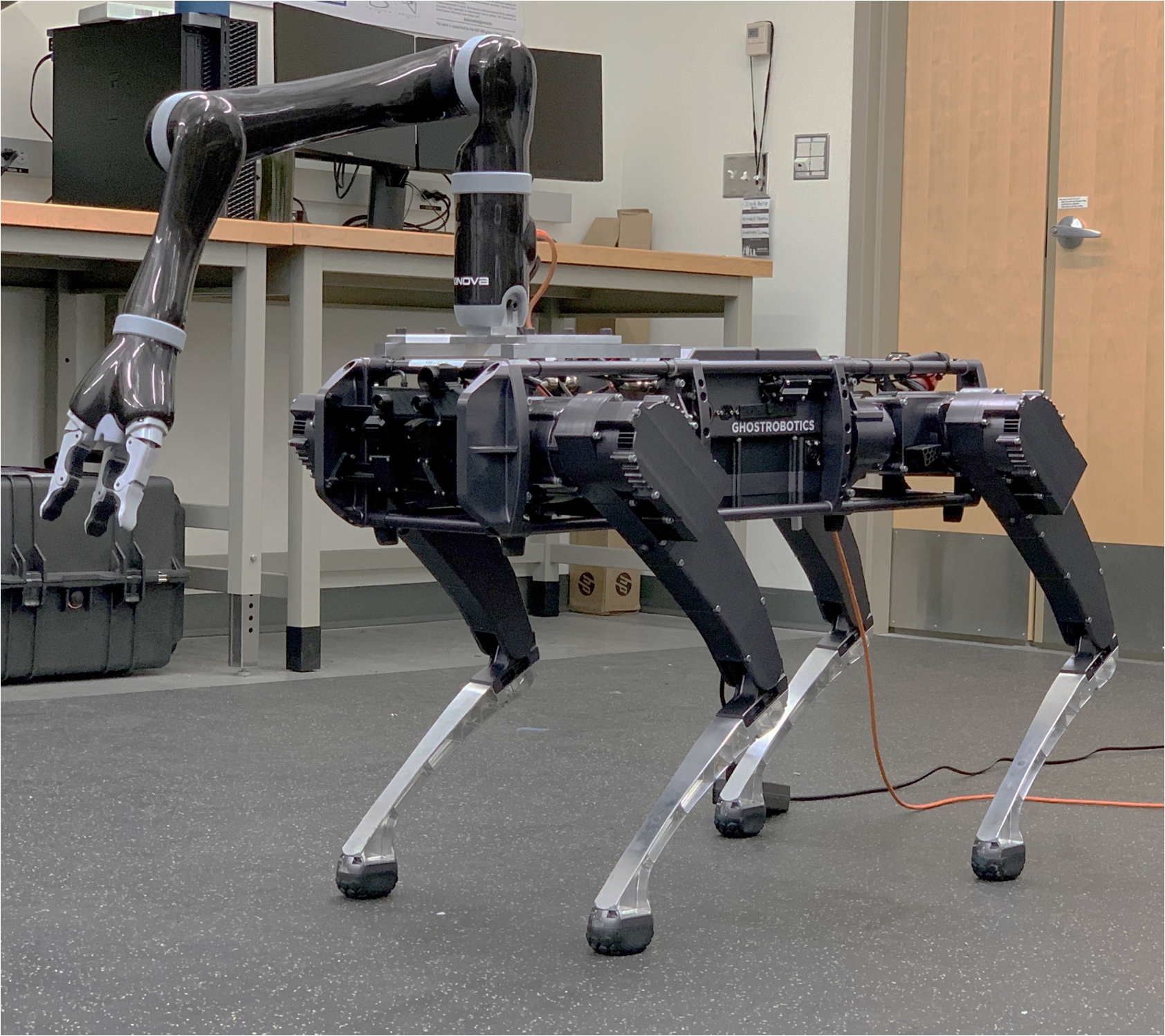}
\vspace{-0.3em}
\caption{Vision 60 robot, augmented with Kinova arm, whose full-order model will be used for the numerical simulations.}
\label{V60_Kinova}
\vspace{-1.5em}
\end{figure}

\vspace{-0.5em}
\subsection{Related Work and Motivation}

Hybrid systems theory has become a powerful approach for modeling and control of legged locomotion  \cite{Grizzle_Asymptotically_Stable_Walking_IEEE_TAC,Ames_RES_CLF_IEEE_TAC,Sreenath_Grizzle_HZD_Walking_IJRR,Veer_Poulakakis,Tedrake_Robus_Limit_Cycles_CDC,Byl_HZD,Johnson_Burden_Koditschek,Spong_Controlled_Symmetries_IEEE_TAC,Manchester_Tedrake_LQR_IJRR,Vasudevan2017,Hamed_Gregg_IEEE_TAC,Hamed_Buss_Grizzle_BMI_IJRR}. Existing nonlinear control techniques that address the hybrid nature of locomotion models have been developed based on hybrid reduction \cite{Ames_HybridReduction_Original_Paper}, controlled symmetries \cite{Spong_Controlled_Symmetries_IEEE_TAC}, transverse linearization \cite{Manchester_Tedrake_LQR_IJRR}, and hybrid zero dynamics (HZD) \cite{Westervelt_Grizzle_Koditschek_HZD_IEEE_TRO,Ames_RES_CLF_IEEE_TAC}.  In the HZD approach, a set of kinematic constraints, referred to as virtual constraints, is defined as outputs for the dynamical models of legged robots to coordinate the links of the robot within a stride. The virtual constraints are imposed by the action of a feedback controller (e.g., input-output (I-O) linearization \cite{Isidori_Book}). The virtual constraint controllers have been numerically and experimentally validated for the motion control of bipedal robots \cite{Cheavallereau_Grizzle_RABBIT,Sreenath_Grizzle_HZD_Walking_IJRR,Ramezani_Hurst_Hamed_Grizzle_ATRIAS_ASME,Ames_DURUS_TRO,Ames_RES_CLF_IEEE_TAC,Byl_HZD,Martin_Schmiedeler_IJRR}, powered prosthetic legs  \cite{Gregg_Toward_Biomimetic_Control_IEEE_CST,zhao2016multicontact} and exoskeletons \cite{agrawal2017first}. The gait planning in the HZD approach is typically formulated as a nonlinear programming (NLP) problem. Although the HZD-based optimization techniques effectively address the gait planning of planar (i.e., 2D) robots, they become computationally intensive for high degree of freedom (DOF) spatial (i.e., 3D) robots. Reference \cite{Ames_DURUS_TRO} developed a scalable gait planning approach based on HZD and direct collocation that can be effectively solved with existing NLP tools (see e.g., \cite{Ames_DURUS_TRO} for the bipedal robot DURUS and \cite{Hamed_Ma_Ames_Vision60} for the quadrupedal robot Vision 60). Although the direct-collocation based HZD approach generates optimal trajectories for full-order hybrid models of legged robots in a fast manner, it \textit{cannot} address real-time trajectory optimization in complex environments. This motivates the integration of convex optimization-based path planning techniques with the HZD framework.

% HZD is an extension of the notion of zero dynamics \cite{Isidori_Book} for hybrid models of legged locomotion to systematically address underactuation.

MPC-based approaches integrated with reduced-order models have been used for real-time path planning of bipedal and quadrupedal locomotion, see e.g., \cite{Leonessa_Pratt_MPC,Ott_MPC,LIP_Original_Kajita,Pratt_LIP,little_dog_QP}. Most of these approaches address linear inverted pendulum (LIP) models for  bipedal locomotion while generating optimal trajectories for the center of mass (COM) and center of pressure (COP) of the robot subject to the zero moment point (ZMP)  conditions \cite{Vukobratovic_Book} and feasibility of the GRF. These techniques, however, cannot be easily extended to quadrupedal locomotion as the LIP-based MPC approaches do \textit{not} consider the feasibility of all individual GRFs on the contacting legs. To tackle this problem, \cite{Kim_Wensing_Convex_MPC_01,Park_Pandala_Ding_MPC_01,seminimpc} have developed an interesting convex optimization formulation based on MPC and centroidal dynamics. In particular, the MPC approach of \cite{Kim_Wensing_Convex_MPC_01,Park_Pandala_Ding_MPC_01,seminimpc} plans for the optimal GRFs of the contacting leg ends at every time sample (e.g., 200 Hz). The optimal GRFs are then employed by a lower-level controller to generate the required torques at the joint levels. The GRF planning-based approaches have been numerically and experimentally verified for agile quadrupedal locomotion. Although the MPC problems are typically formulated as convex QPs, they still have a significant amount of decision variables to be optimized at every time sample which makes the MPC-based techniques computationally intensive. On the other hand, the gap between reduced- and full-order dynamical models of increasingly sophisticated legged machines with heavy legs has not been addressed in the MPC-based techniques. In this paper, we aim to answer the following fundamental \textit{questions}: 1) how can we implement an MPC-based approach for real-time planning of legged locomotion at a slower rate, e.g., at the beginning of each continuous-time domain rather than every time sample, while guaranteeing the stability of the gaits, and 2) how  can we systematically bridge the gap between the reduced- and full-order dynamical models using the HZD approach?

\vspace{-0.5em}
\subsection{Objectives and Contributions}

The \textit{objectives} and \textit{key contributions} of this paper are as follows. The paper develops a hierarchical control algorithm based on an event-based MPC and QP-based virtual constraint controllers that generate and stabilize quadrupedal locomotion patterns in real time. In particular, we do not employ any NLP-based and computationally expensive HZD gait planning algorithm. The higher level of the proposed approach formulates a finite-time and QP-based MPC at the beginning of each continuous-time domain (i.e., event-based manner) to compute the optimal COM trajectories subject to the net GRF feasibility. This can address any locomotion pattern in different directions (e.g., forward, backward, sideways, in-place, and diagonal) via any predefined contact sequences with possible start and stop conditions. The stability of the target point for the LIP model with the proposed event-based MPC is addressed. It is shown that under some sufficient conditions, one would not need to solve the MPC problem at every-time sample to reach the target point. This significantly reduces the computational complexity of real-time MPC. To bridge the gap between the LIP and full-order dynamical model of locomotion, a low-level and QP-based virtual constraints controller is developed. The low-level controller imposes the full-order dynamics to track the optimal reduced-order trajectories while having all individual GRFs in the friction cone. Additionally, the lower-level QP has fewer decision variables compared to the event-based MPC (e.g., $50\%$). The analytical results of the paper are numerically confirmed on full-order simulation models of a 22 DOF quadrupedal robot, Vision 60, that is augmented by a Kinova robotic manipulator (see Fig. \ref{V60_Kinova}). The paper also investigates the robustness of the proposed control algorithm against different contact modeling approaches including the LuGre model \cite{LuGre_model} and the per-contact iteration technique \cite{RAISIM}. It is shown that the proposed controller can systematically generate and stabilize gaits in different directions with start and stop conditions. In \cite{NLIP_LIP_HyQ}, the authors presented an interesting LIP-based and nonlinear trajectory optimization framework to generate a wide range of quadrupedal gaits. The approach of the current paper completely differs from \cite{NLIP_LIP_HyQ} in that 1) we address event-based MPC for trajectory planning of the LIP model while addressing the asymptotic stability of the final target point, and 2) we reduce the gap between the reduced- and full-order dynamical models by setting up the QP-based virtual constraint controllers. Unlike the two-level control approach of \cite{kim2019highly}, the current paper 1) studies the asymptotic stability under the MPC approach, and then 2) extends the concept of HZD controllers, based on convex optimization, to quadrupedal locomotion.

\begin{figure*}[t!]
\centering
\vspace{0.1em}
\includegraphics[width=\linewidth]{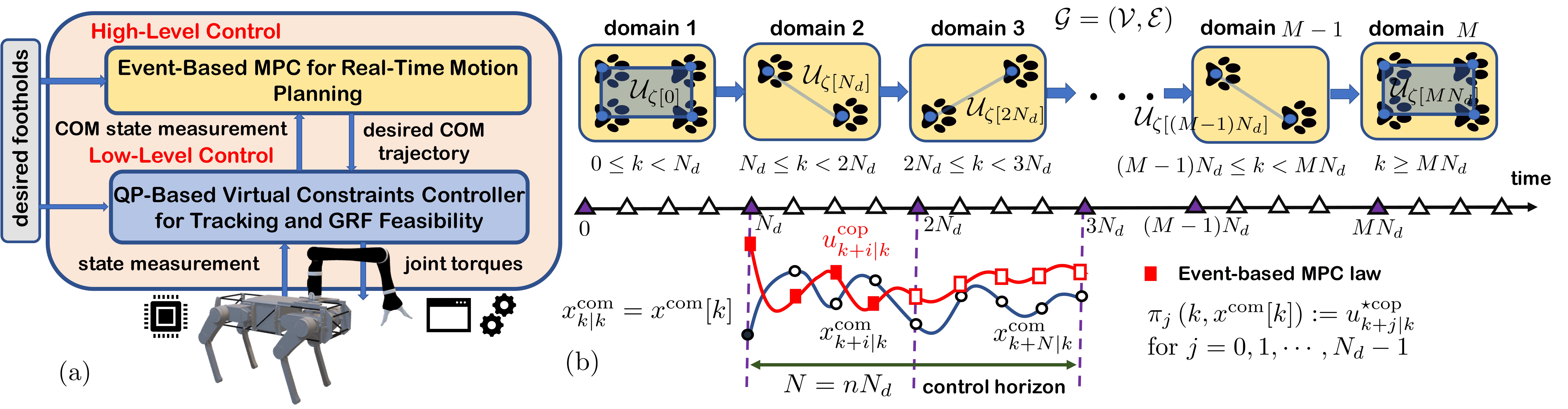}
%\vspace{-0.8em}
\caption{(a) Illustration of the proposed hierarchical control algorithm, based on nonlinear control, QP, and event-based MPC. (b) Illustration of a locomotion pattern with the corresponding graph $\mathcal{G}=(\mathcal{V},\mathcal{E})$ and the event-based MPC law.}
\label{HierachicalControl_MPC_illustration}
\vspace{-1.5em}
\end{figure*}

%%%%%%%%%%%%%%%%%%%%%%%%%%%%%%%%%%%%%%%%%%%%%%%%%%%%%%%%%%%%%%%%%%%%%%%%%%%%%%%%

\section{MODELS OF LEGGED LOCOMOTION}
\vspace{-0.2em}

We consider a full-order dynamical model of Vision 60 that is augmented by a Kinova robotic manipulator for locomotion and manipulation purpose. Vision 60 is a quadrupedal robot designed and manufactured by Ghost Robotics. The floating-base model of the composite robot consists of 22 DOFs of which 12 DOFs are actuated and assigned to legs. In particular, each leg of the robot has an actuated 2 DOF hip joint plus an actuated 1 DOF knee joint and ends at a point foot. In addition, 4 DOFs with 4 actuators are assigned to the Kionva manipulator. The remaining 6 DOFs are unactuated and describe the absolute position and orientation of the robot with respect to an inertial world frame. The generalized coordinates of the robot can be expressed as $q:=\textrm{col}(p_{b},\phi_{b},q_{\textrm{body}})\in\mathcal{Q}\subset\Real^{22}$, in which $p_{b}\in\Real^{3}$ and $\phi_{b}\in\Real^{3}$ describe the absolute position and orientation of the torso, respectively. Moreover, $q_{\textrm{body}}\in\Real^{16}$ represents the shape (i.e., internal joints) of the robot. The state vector of the mechanical system is taken as $x:=\textrm{col}(q,\dot{q})\in\mathcal{X}$, where $\mathcal{X}:=\textrm{T}\mathcal{Q}:=\mathcal{Q}\times\Real^{22}$ denotes the state space. In our notation, ``$\textrm{col}$'' represents the column vector. The control inputs (i.e., joint torques) are finally represented by $\tau\in\Real^{16}$.

The evolution of the robot can be expressed by the following ordinary differential equations (ODEs)
\begin{equation}\label{equation_of_motion}
D(q)\,\ddot{q}+H\left(q,\dot{q}\right)=\Upsilon\,\tau+\sum_{\ell\in\mathcal{C}}J_{\ell}^\top(q)\,F_{\ell},
\end{equation}
where $D(q)\in\Real^{22\times22}$ represents the positive definite mass-inertia matrix, $H(q,\dot{q})\in\Real^{22}$ denotes the Coriolis, centrifugal, and gravitational forces, and $\Upsilon\in\Real^{22\times16}$ represents the input distribution matrix. In our notation, $\mathcal{C}$ is the index set of contact points with the ground. Furthermore, for every $\ell\in\mathcal{C}$, $J_{\ell}(q)\in\Real^{3\times22}$ and $F_{\ell}\in\Real^{3}$ denote the corresponding contact Jacobian matrix and GRF, respectively. The contact forces can be computed using 1) the rigid contact assumption and hybrid system approach \cite{Jessy_Book,Hamed_Ma_Ames_Vision60}, 2) compliant contact models (e.g., LuGre model \cite{LuGre_model}), or 3) nonlinear and linear complementarity problems \cite{NCP_LCP} as well as optimization-based techniques \cite{MuJoCo,RAISIM}. We remark that the model \eqref{equation_of_motion} is valid if $F_{\ell}\in\mathcal{FC}$ for all $\ell\in\mathcal{C}$, where $\mathcal{FC}:=\{(F_{x},F_{y},F_{z})^\top|\,F_{z}>0,\,\pm{}F_{x}<\frac{\mu}{\sqrt2}F_{z},\,\pm{}F_{y}<\frac{\mu}{\sqrt{2}}F_{z}\}$ denotes the friction cone for some friction coefficient $\mu$. For later purposes, the equations of motion in \eqref{equation_of_motion} can be written in a state space form as
\begin{equation}\label{state_eq}
\dot{x}=f(x)+g(x)\,\tau+w(x)\,F,
\end{equation}
in which $F:=\textrm{col}\{F_{\ell}\,|\,\ell\in\mathcal{C}\}$ represents the contact forces.

\begin{remark}
We will assume a rigid contact model for the development of the lower-level nonlinear controller in Section \ref{QP-BASED VIRTUAL CONSTRAINTS FOR TRACKING AND FEASIBILITY}. The robustness of the proposed control scheme against different contact modeling approaches will be verified in Section \ref{NUMERICAL SIMULATIONS}.
\end{remark}

%%%%%%%%%%%%%%%%%%%%%%%%%%%%%%%%%%%%%%%%%%%%%%%%%%%%%%%%%%%%%%%%%%%%%%%%%%%%%%%%

\section{EVENT-BASED PREDICTIVE CONTROL}
\label{EVENT-BASED PREDICTIVE CONTROL}
\vspace{-0.2em}

In this paper, we develop a two-level control algorithm that can steer a quadrupedal robot from an initial point to a final point in a stable and robust manner (see Fig. \ref{HierachicalControl_MPC_illustration}a). At the higher level of the control algorithm, we employ an event-based MPC for real-time motion planning of the COM. In particular, we formulate a finite-time optimal control problem based on MPC and convex QP to steer the state of a reduced-order LIP model subject to the feasibility of the net GRF. The lower-level controller is developed based on the concept of virtual constraints and QP. More specifically, a QP formulation is developed to solve the feedback linearization problem of the full-order model to track the optimal COM trajectory as well as desired footholds while satisfying the friction cone conditions for all individual GRFs. In this section, we address the event-based MPC approach. The QP-based virtual constraints controller will be presented in Section \ref{QP-BASED VIRTUAL CONSTRAINTS FOR TRACKING AND FEASIBILITY}.

\noindent\textbf{Reduced-Order LIP Model:} The LIP model can be described by the following ODEs \cite{LIP_Original_Kajita}
\begin{equation}\label{LIP}
\begin{bmatrix}
\ddot{r}^{\textrm{com}}_{x}\\
\ddot{r}^{\textrm{com}}_{y}
\end{bmatrix}=\frac{g}{r_{z}^{\textrm{com}}}\begin{bmatrix}
r^{\textrm{com}}_{x}-u^{\textrm{cop}}_{x}\\
r^{\textrm{com}}_{y}-u^{\textrm{cop}}_{y}
\end{bmatrix},
\end{equation}
where $r^{\textrm{com}}:=\textrm{col}(r^{\textrm{com}}_{x},r^{\textrm{com}}_{y})\in\Real^{2}$ denotes the Cartesian coordinates of the COM with respect to the inertial world frame, projected onto the $xy$-plane, $r_{z}^{\textrm{com}}$ represents the constant height of the COM, $g$ is the gravitational constant, and $u^{\textrm{cop}}:=\textrm{col}(u^{\textrm{cop}}_{x},u^{\textrm{cop}}_{y})\in\Real^{2}$ denotes the Cartesian coordinates of the COP. From \eqref{LIP}, the net GRF applied to the COM can be expressed as
\begin{equation}\label{net_force}
F_{\textrm{net}}:=\sum_{\ell\in\mathcal{C}}F_{\ell}=\!\!\begin{bmatrix}
m_{\textrm{tot}}\,\ddot{r}^{\textrm{com}}_{x}\\
m_{\textrm{tot}}\,\ddot{r}^{\textrm{com}}_{y}\\
m_{\textrm{tot}}g\\
\end{bmatrix}\!\!=m_{\textrm{tot}}g\begin{bmatrix}
\frac{1}{r_{z}^{\textrm{com}}}\left(r^{\textrm{com}}_{x}-u^{\textrm{cop}}_{x}\right)\\
\frac{1}{r_{z}^{\textrm{com}}}\left(r^{\textrm{com}}_{y}-u^{\textrm{cop}}_{y}\right)\\
1
\end{bmatrix},
\end{equation}
in which $m_{\textrm{tot}}$ represents the total mass of the robot. By defining the LIP state vector $x^{\textrm{com}}:=\textrm{col}(r^{\textrm{com}}_{x},\dot{r}^{\textrm{com}}_{x},r^{\textrm{com}}_{y},\dot{r}^{\textrm{com}}_{y})\in\Real^{4}$ and employing the zero-order hold (ZOH) discretization approach for some sampling time $T_{d}$, the ODEs in \eqref{LIP} can be discretized as follows
\begin{equation}\label{LIp_discrete}
    x^{\textrm{com}}[k+1]=A_{d}\,x^{\textrm{com}}[k] + B_{d}\,u^{\textrm{cop}}[k],
\end{equation}
where $k\in\Integer$ represents a non-negative integer with $A_{d}\in\Real^{4\times4}$ and $B_{d}\in\Real^{4\times2}$ being the state and input matrices, respectively.

\noindent\textbf{Steering Problem:} We are interested in steering the discrete-time dynamics \eqref{LIp_discrete} from an initial state to a final state over $M$ continuous-time domains for some positive integer $M\geq1$. We consider a general locomotion pattern with an arbitrary sequence of double-, triple-, or quadruple-contact domains. Unlike our previous work \cite{Hamed_Ma_Ames_Vision60}, the locomotion pattern does \textit{not} need to be cyclic. In particular, we address \textit{aperiodic} locomotion with start and stop options. To make this notion more precise, we consider a directed graph $\mathcal{G}=(\mathcal{V},\mathcal{E})$ for the desired locomotion pattern, whose \textit{vertex set} represents the ordered sequence of continuous-time domains. In addition, $\mathcal{E}\subset\mathcal{V}\times\mathcal{V}$ denotes the \textit{edge set} to represent transitions (see Fig. \ref{HierachicalControl_MPC_illustration}b). Let us suppose that each continuous-time domain consists of $N_{d}\geq1$ time samples (i.e., grid points). We then define the \textit{domain indicator function} as $\zeta:\Integer\rightarrow\{1,2,\cdots,M\}$ by $\zeta[k]:=\lfloor\frac{k}{N_{d}}\rfloor+1$ for $0\leq{k}<MN_{d}$ and $\zeta[k]:=M$ for $k\geq{M}$ to assign the domain index for every time sample $k\in\Integer$. Here, $\lfloor{.}\rfloor$ represents the floor function. For the feasibility of the LIP model, we assume that the control input $u^{\textrm{cop}}[k]$ lies in the support polygon which is defined as the convex hull of the contacting points with the ground. That is,
$u^{\textrm{cop}}[k]\in\mathcal{U}_{\zeta[k]}$ for all $k\in\Integer$, where $\mathcal{U}_{\zeta[k]}\subset\Real^{2}$ is the corresponding convex hull for the domain $\zeta[k]$. If we define the \textit{contact coordinates matrix} for the domain $\zeta[k]$ as $C_{\zeta[k]}$ whose columns represent the Cartesian coordinates of the contacting feet with the ground, $u^{\textrm{cop}}[k]\in\mathcal{U}_{\zeta[k]}$ is equivalent to the existence of a time-varying vector $\lambda[k]$ such that
\begin{equation}\label{Feasibity_COP_02}
\textbf{0}\leq\lambda[k]\leq\textbf{1},\,\,\, \textbf{1}^\top\lambda[k]=1,\,\,\, u^{\textrm{cop}}[k]=C_{\zeta[k]}\lambda[k].
\end{equation}
We remark that in our notation, $\textbf{0}$ and $\textbf{1}$ denote vectors whose elements are zero and one, respectively. In addition, for the feasibility of the LIP  model, the net force must lie in the friction cone, i.e., $F_{\textrm{net}}\in\mathcal{FC}$. This latter condition together with \eqref{net_force} can be expressed as
\begin{equation}\label{friction_cone_net}
\Phi\,x^{\textrm{com}}[k]+\Psi\,u^{\textrm{cop}}[k]\leq\eta,\quad\forall k\in\Integer
\end{equation}
for some proper $\Phi$ and $\Psi$ matrices and some proper $\eta$ vector.

\begin{problem}[Optimal Steering Problem]\label{optimal_control_problem}
For a given locomotion graph $\mathcal{G}=(\mathcal{V},\mathcal{E})$, a phase index function $\zeta:\Integer\rightarrow\{1,2,\cdots,M\}$, a set of known contact coordinates matrices $\{C_{\zeta[k]}\}_{k\in\Integer}$, an initial state $x^{\textrm{com}}_{0}$, a final state $x^{\textrm{com}}_{f}$, and a steering time $T_{f}\in\Integer$, the optimal steering problem consists of finding an optimal sequence of control (i.e., COP) inputs $u^{\textrm{cop}}[k]$ for $0\leq{k}\leq{T_{f}-1}$ that steer \eqref{LIp_discrete} from $x^{\textrm{com}}_{0}$ to $x^{\textrm{com}}_{f}$ subject to \eqref{Feasibity_COP_02} and \eqref{friction_cone_net}.
\end{problem}

\noindent\textbf{Event-Based MPC:} To address Problem \ref{optimal_control_problem}, we set up an event-based MPC that is solved at the beginning of each domain (i.e., event) with some control horizon $N=nN_{d}$ and $n\geq1$. In particular, for every time sample $k=mN_{d}$ with $m\in\Integer$ (i.e., beginning of each continuous-time domain), we consider the following finite-time optimal control problem
\begin{alignat}{8}
& &&\!\!\min_{U^{\textrm{cop}}_{k\rightarrow{}k+N-1|k}}\!\!\!\! &&\mathcal{J}_{k}\left(x^{\textrm{com}}[k],U^{\textrm{cop}}_{k\rightarrow k+N-1|k}\right)
\!&&=p\left(x^{\textrm{com}}_{k+N|k}\right)\nonumber\\
& && && &&+\!\!\sum_{i=0}^{N-1}\!\!\mathcal{L}\left(x^{\textrm{com}}_{k+i|k},u^{\textrm{cop}}_{k+i|k}\right)\nonumber\\
& && \quad\,\,\textrm{s.t.} && \qquad x^{\textrm{com}}_{k+i+1|k}=A_{d}\,x^{\textrm{com}}_{k+i|k}&&+B_{d}\,u^{\textrm{cop}}_{k+i|k}\nonumber\\
& && && \qquad \Phi\,x_{k+i|k}^{\textrm{com}}+\Psi\,u_{k+i|k}^{\textrm{cop}}&&\leq\eta\nonumber\\
& && && \qquad u_{k+i|i}^{\textrm{cop}}\in\mathcal{U}_{\zeta[k+i]},\,\, &&i=0,1,\cdots,N-1\nonumber\\
& && && \qquad x_{k|k}^{\textrm{com}}=x^{\textrm{com}}[k], && \label{realtime_MPC}
\end{alignat}
where $U_{k\rightarrow{}k+N-1|k}^{\textrm{cop}}:=\textrm{col}(u^{\textrm{cop}}_{k|k},\cdots,u^{\textrm{cop}}_{k+N-1|k})$ and $x^{\textrm{com}}_{k+i|k}$ represents the estimated state vector at time $k+i$ predicted at time $k$ according to the recursive law $x^{\textrm{com}}_{k+i+1|k}=A_{d}\,x^{\textrm{com}}_{k+i|k}+B_{d}\,u^{\textrm{cop}}_{k+i|k}$ starting from the current state $x^{\textrm{com}}_{k|k}:=x^{\textrm{com}}[k]$. In an analogous manner, $u^{\textrm{cop}}_{k+i|k}$ denotes the COP input at time $k+i$ computed at time $k$. Furthermore, $p(x^{\textrm{com}}_{k+N|k})$ and $\mathcal{L}(x_{k+i|k}^{\textrm{com}},u_{k+i|k}^{\textrm{cop}})$ are the terminal and stage costs, respectively, defined as $p(x^{\textrm{com}}_{k+N|k}):=\|x^{\textrm{com}}_{k+N|k}\ - d^{\textrm{com}}_{k+N|k}\|_{P}^{2}$ and $\mathcal{L}(x_{k+i|k}^{\textrm{com}},u_{k+i|k}^{\textrm{cop}}):=\|x_{k+i|k}^{\textrm{com}}-d^{\textrm{com}}_{k+i|k}\|_{Q}^{2}+\|u_{k+i|k}^{\textrm{cop}}\|_{R}^{2}$
for some positive definite matrices $P\in\Real^{4\times4}$, $Q\in\Real^{4\times4}$, and $R\in\Real^{2\times2}$, in which $\|z\|_{P}^{2}:=z^\top{}Pz$. In our notation, $d_{k+i|k}^{\textrm{com}}$ represents a desired state trajectory for $x^{\textrm{com}}_{k+i|k}$ that is smooth in $i$ (e.g., linear) while starting at the current state $x^{\textrm{com}}[k]$ and ending at the final state $x^{\textrm{com}}_{f}$. Let $U^{\star}_{k\rightarrow{}k+N-1|k}:=\textrm{col}(u^{\star\textrm{cop}}_{k|k},\cdots,u^{\star\textrm{cop}}_{k+N-1|k})$ be the optimal solution of the problem \eqref{realtime_MPC}. Then in our proposed approach, the first $N_{d}$ components of $U^{\star}_{k\rightarrow{}k+N-1|k}$, that correspond to the time samples of the current continuous-time domain, are employed to the system \eqref{LIp_discrete}, that is,
\begin{equation}\label{MPC_law}
    u^{\textrm{cop}}[k+j] = u^{\star\textrm{cop}}_{k+j|k},\quad j=0,1,\cdots,N_{d}-1.
\end{equation}
For later purposes, we assume that $\pi_{j}(k,x^{\textrm{com}}[k]):=u^{\star\textrm{cop}}_{k+j|k}$ for $j=0,1,\cdots,N_{d}-1$ represents the MPC law computed at time $k$ (see Fig. \ref{HierachicalControl_MPC_illustration}b). The evolution of the closed-loop LIP model can then be expressed as
\begin{alignat}{4}
& x^{\textrm{com}}[mN_{d}+j] &&= A_{d}^{j}\,x^{\textrm{com}}[mN_{d}] \nonumber\\
& &&+ \sum_{\ell=0}^{j-1} A_{d}^{j-1-\ell}B_{d}\,\pi_{\ell}\left(mN_{d},x^{\textrm{com}}[mN_{d}]\right),\label{closed_loop_LIP}
\end{alignat}
for every $m\in\Integer$ and $j=0,1,\cdots,N_{d}$. In addition, we can define the following \textit{down-sample closed-loop system}
\begin{alignat}{4}
& x^{\textrm{com}}[(m+1)N_{d}] &&= A_{d}^{N_{d}}\,x^{\textrm{com}}[mN_{d}] \nonumber\\
& &&+ \sum_{\ell=0}^{N_{d}-1} A_{d}^{N_{d}-1-\ell}B_{d}\,\pi_{\ell}\left(mN_{d},x^{\textrm{com}}[mN_{d}]\right)\nonumber\\
& &&=: \Delta_{\textrm{down}}\left(mN_{d},x^{\textrm{com}}[mN_{d}]\right)\label{downsample_closed_loop_LIP}
\end{alignat}
for all $m\in\Integer$ whose state is updated every $N_{d}$ samples (i.e., based on events). The down-sample closed loop system will be studied for the stability analysis of the proposed control approach in Section \ref{STABILITY AND CONVERGENCE ANALYSIS}.

\begin{remark}
We remark that the MPC problem is solved at the beginning of each continuous-time domain (i.e., event) which corresponds to $k=mN_{d}$ and $m<M$. For all $m\geq M$, we still continue solving the MPC problem for every $k=mN_{d}$. It is worth mentioning that for these time values, $\zeta[k+i]=M$, and hence, $\mathcal{U}_{\zeta[k+i]}=\mathcal{U}_{M}$ which makes the optimal control problem \eqref{realtime_MPC} well-defined. During these time samples, the robot does not take any further steps (i.e., last domain) and only moves its COM to reach the final state.
\end{remark}

\begin{remark}\label{PD_QP}
We remark that the MPC formulation \eqref{realtime_MPC} together with \eqref{Feasibity_COP_02} can be expressed as QP in terms of the decisions variables $\{x^{\textrm{com}}_{k+i|k}\}_{i=1}^{N}$, $\{u^\textrm{cop}_{k+i|k}\}_{i=0}^{N-1}$, and $\{\lambda_{k+i|k}\}_{i=0}^{N-1}$ to retain the sparsity structure of \cite{Boyd_FastMPC}. To make the cost function of this QP positive definite in terms of all decision variables, one can add a term corresponding to $\lambda_{k+i|k}$, i.e.,
\begin{equation*}
\mathcal{J}_{k}\!=p\left(x^{\textrm{com}}_{k+N|k}\right)\!+\sum_{i=0}^{N-1}\left\{\mathcal{L}\left(x^{\textrm{com}}_{k+i|k},u^{\textrm{cop}}_{k+i|k}\right)+\mathcal{H}\left(\lambda_{k+i|k}\right)\right\},
\end{equation*}
where $\mathcal{H}(\lambda_{k+i|k}):=\|\lambda_{k+i|k}-\lambda^{\textrm{des}}_{k+i|k}\|_{\hat{R}}^{2}$ for some desired trajectory $\lambda_{k+i|k}^{\textrm{des}}$ and some positive definite matrix $\hat{R}$.
\end{remark}

%%%%%%%%%%%%%%%%%%%%%%%%%%%%%%%%%%%%%%%%%%%%%%%%%%%%%%%%%%%%%%%%%%%%%%%%%%%%%%%%

\section{ASYMPTOTIC STABILITY ANALYSIS}
\label{STABILITY AND CONVERGENCE ANALYSIS}
\vspace{-0.2em}

The objective of this section is to address the asymptotic stability property of the target state for the closed-loop system. We aim to establish a connection between the asymptotic stability of the target state for the down-sample closed-loop system and that of the the original closed-loop system. Without loss of generality, we assume that the target state is taken at the origin, i.e., $x^{\textrm{com}}_{f}=0$. We then make the following assumption.

\begin{assumption}\label{Lipshitz_continuity}
The MPC problem is formulated as QP with a positive definite cost function as mentioned in Remark \ref{PD_QP}. In addition, the MPC is feasible for every $k=mN_{d}$ with optimal laws satisfying the conditions $\pi_{j}(mN_{d},0)=0$ for all $m\in\Integer$ and $j=0,1,\cdots,N_{d}-1$.
\end{assumption}

\begin{lemma}\label{Lipshitz_continuity_lemma}\textit{\textbf{(Lipschitz Continuity):}}
\textit{Suppose that Assumption \ref{Lipshitz_continuity} is satisfied. Then, the MPC laws are locally Lipschitz in $x^{\textrm{com}}$, i.e., there exists $\rho_{m,j}>0$ for all $m\in\Integer$ and $j=0,1,\cdots,N_{d}-1$ such that $\|\pi_{j}(mN_{d},x^{\textrm{com}})\|\leq\rho_{m,j}\|x^{\textrm{com}}\|$ for all $x^{\textrm{com}}$ in an open neighborhood of the origin.}
\end{lemma}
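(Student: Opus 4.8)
The plan is to recognize the MPC optimization \eqref{realtime_MPC}, augmented as in Remark \ref{PD_QP}, as a multiparametric quadratic program (mpQP) in which the current state $x^{\textrm{com}}[mN_{d}]=x^{\textrm{com}}$ enters only as a \emph{parameter}. First I would stack the decision variables $\{x^{\textrm{com}}_{k+i|k}\}$, $\{u^{\textrm{cop}}_{k+i|k}\}$, and $\{\lambda_{k+i|k}\}$ into a single vector $z$ and write the problem in the standard form $\min_{z}\tfrac{1}{2}z^\top H z+(F x^{\textrm{com}}+f)^\top z$ subject to $G z\leq W+S x^{\textrm{com}}$, where the recursive dynamics, the friction-cone inequality \eqref{friction_cone_net}, and the support-polygon constraint \eqref{Feasibity_COP_02} all translate into affine constraints whose right-hand side is affine in the parameter $x^{\textrm{com}}$. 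Assumption \ref{Lipshitz_continuity} guarantees $H\succ 0$ (the $\mathcal{H}(\lambda_{k+i|k})$ term supplies positive definiteness in the $\lambda$-block), so the program is strictly convex and the optimizer $z^\star(x^{\textrm{com}})$ is unique.

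Second, I would invoke the fundamental structural result for strictly convex mpQPs: the optimal solution $z^\star(\cdot)$ is a continuous, piecewise-affine function of the parameter, defined over a polyhedral partition of the feasible parameter set into finitely many critical regions, and on each region $z^\star(x^{\textrm{com}})=K x^{\textrm{com}}+c$ for a constant gain $K$ and offset $c$ (this follows from a KKT analysis at a fixed active set). Since $\pi_{j}(mN_{d},x^{\textrm{com}})=u^{\star\textrm{cop}}_{k+j|k}$ is merely a fixed subvector, i.e.\ a linear selection, of $z^\star(x^{\textrm{com}})$, it inherits both continuity and the piecewise-affine structure.

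Third, I would use the hypothesis $\pi_{j}(mN_{d},0)=0$ to eliminate the affine offsets near the origin. Assuming the origin is feasible—which the hypothesis presupposes—and lies in the relative interior of the feasible parameter set, only finitely many critical regions meet the origin; on the closure of each such region the piece reads $\pi_{j}(mN_{d},x^{\textrm{com}})=K_{m,j}^{(r)}x^{\textrm{com}}+c_{m,j}^{(r)}$, and letting $x^{\textrm{com}}\to 0$ within that closure together with $\pi_{j}(mN_{d},0)=0$ forces $c_{m,j}^{(r)}=0$. Hence each adjacent piece is genuinely linear, and on their union—an open neighborhood of the origin—the bound $\|\pi_{j}(mN_{d},x^{\textrm{com}})\|\leq\bigl(\max_{r}\|K_{m,j}^{(r)}\|\bigr)\,\|x^{\textrm{com}}\|$ holds, so that $\rho_{m,j}:=\max_{r}\|K_{m,j}^{(r)}\|$ is the claimed constant.

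The step I expect to be the main obstacle is the careful treatment of the origin when it sits on the common boundary of several critical regions: I must argue, via continuity of the piecewise-affine map, that \emph{every} adjacent affine piece has a vanishing constant term, rather than only a single piece. A secondary technical point is confirming that the origin lies in the relative interior of the feasible parameter set—i.e.\ that the target configuration satisfies the friction-cone and support-polygon constraints with margin—so that a genuine open neighborhood (not merely a one-sided set) is available; this is where strict feasibility of the target point is implicitly invoked.
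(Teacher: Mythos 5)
Your proof is correct, but it takes a genuinely different route from the paper's. The paper's proof is a two-step reduction: it rewrites the QP of Remark \ref{PD_QP} as a canonical QP, $\min_{\xi}\{\frac{1}{2}\xi^\top\mathcal{P}\xi+\psi(x^{\textrm{com}})^\top\xi \mid \mathcal{A}\xi\geq\phi(x^{\textrm{com}}),\ \xi\geq0\}$ with $\mathcal{P}\succ0$, and then invokes Theorem 2.1 of \cite{Coroianu2016}, which gives Lipschitz continuity of the CQP solution map with respect to the data $(\psi,\phi)$; composing with the smooth (hence locally Lipschitz) dependence of $(\psi,\phi)$ on $x^{\textrm{com}}$ and using $\pi_{j}(mN_{d},0)=0$ yields the claimed bound. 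You instead go through explicit-MPC theory (the Bemporad--Morari--Dua--Pistikopoulos mpQP structure theorem): continuous piecewise-affine optimizer over finitely many polyhedral critical regions, vanishing of the affine offset on \emph{every} region whose closure contains the origin (your continuity argument for boundary regions is exactly right, and the finiteness of the partition correctly gives a ball covered by those regions), and $\rho_{m,j}:=\max_{r}\|K^{(r)}_{m,j}\|$. Your route proves something slightly stronger --- the law is exactly piecewise \emph{linear} near the origin, not merely Lipschitz --- and produces an explicit constant. What it costs is an extra structural hypothesis the paper's argument avoids: the mpQP partition theorem requires the QP data to be \emph{affine} in the parameter, so you need the desired trajectory $d^{\textrm{com}}_{k+i|k}$ (which starts at the current state) and $\lambda^{\textrm{des}}_{k+i|k}$ to depend affinely on $x^{\textrm{com}}[k]$; this holds for the paper's suggested linear interpolant, but the paper's smooth-dependence-plus-composition argument tolerates nonlinear dependence. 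You also need mild care with the equality constraints and degeneracy when invoking the polyhedral-partition result (uniqueness from strict convexity rescues continuity, but degenerate active sets may force subdividing regions). Finally, both proofs share the implicit requirement that the MPC be feasible for all parameters in a neighborhood of the origin --- Assumption \ref{Lipshitz_continuity} asserts feasibility only at the sampled instants --- and you deserve credit for flagging that strict feasibility of the target point explicitly, which the paper leaves tacit.
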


\begin{proof}
We can show that the QP problem of Remark \ref{PD_QP} can be expressed as a canonical QP (CQP) \cite{Coroianu2016}, that is,  $\min_{\xi}\{\frac{1}{2}\xi^\top\mathcal{P}\xi+\psi(x^{\textrm{com}})^\top\xi|\,\mathcal{A}\,\xi\geq\phi(x^{\textrm{com}}),\xi\geq0\}$ with some positive definite matrix $\mathcal{P}$, a matrix $\mathcal{A}$, and some vectors $\psi(x^{\textrm{com}})$ and $\phi(x^{\textrm{com}})$ that depend smoothly on the current state $x^{\textrm{com}}$ of the LIP model. Applying \cite[Theorem 2.1]{Coroianu2016} together with $\pi_{j}(mN_{d},0)=0$ completes the proof.
\end{proof}

Now we are in a position to present the following theorem.

\begin{theorem} \label{stability_theorem} \textit{\textbf{(Stability Analysis based on the Down-Sample System):}} \textit{Consider the original and down-sample discrete-time systems \eqref{closed_loop_LIP} and \eqref{downsample_closed_loop_LIP} with the MPC laws satisfying Assumption \ref{Lipshitz_continuity}. Suppose further that the Lipschitz constants $\rho_{m,j}$ are bounded for all $m\in\Integer$ and all $j=0,1,\cdots,N_{d}-1$, that is, $\rho_{m,j}\leq\rho$ for some $\rho>0$.
If the origin is uniformly asymptotically stable for the down-sample discrete-time system \eqref{downsample_closed_loop_LIP}, then it is asymptotically stable for the original system \eqref{closed_loop_LIP}.}
\end{theorem}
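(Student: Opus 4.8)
The plan is to exploit the fact that the down-sample system \eqref{downsample_closed_loop_LIP} is precisely the original system \eqref{closed_loop_LIP} sampled at the domain boundaries $k=mN_{d}$, while the intermediate states $x^{\textrm{com}}[mN_{d}+j]$ for $1\leq j\leq N_{d}-1$ are generated by control laws $\pi_{\ell}(mN_{d},\cdot)$ that depend only on the boundary state $x^{\textrm{com}}[mN_{d}]$. First I would derive a uniform bound on the intermediate states in terms of the nearest preceding boundary state. Taking norms in \eqref{closed_loop_LIP} and invoking Lemma \ref{Lipshitz_continuity_lemma} together with the uniform bound $\rho_{m,\ell}\leq\rho$, I obtain
\[
\|x^{\textrm{com}}[mN_{d}+j]\|\leq\gamma\,\|x^{\textrm{com}}[mN_{d}]\|,\quad j=0,1,\cdots,N_{d},
\]
where $\gamma:=\max_{0\leq j\leq N_{d}}\bigl(\|A_{d}^{j}\|+\rho\,\|B_{d}\|\sum_{\ell=0}^{j-1}\|A_{d}^{j-1-\ell}\|\bigr)\geq1$ is a finite constant. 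The crucial observation is that this estimate requires only that the boundary states $x^{\textrm{com}}[mN_{d}]$ lie in the open neighborhood on which Lemma \ref{Lipshitz_continuity_lemma} applies, and \emph{not} the intermediate states themselves, which avoids any circular dependence.

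With this bound in hand, Lyapunov stability for the original system follows from the uniform stability half of the hypothesis. Given $\epsilon>0$, uniform stability of \eqref{downsample_closed_loop_LIP} yields a $\delta>0$ (which I would further shrink so that it fits inside the Lipschitz neighborhood) such that $\|x^{\textrm{com}}[0]\|<\delta$ forces $\|x^{\textrm{com}}[mN_{d}]\|<\epsilon/\gamma$ for all $m\in\Integer$. Feeding this into the intermediate-state bound gives $\|x^{\textrm{com}}[mN_{d}+j]\|\leq\gamma\,\|x^{\textrm{com}}[mN_{d}]\|<\epsilon$, so every sample $x^{\textrm{com}}[k]$ of the original trajectory satisfies $\|x^{\textrm{com}}[k]\|<\epsilon$, establishing stability.

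For attractivity I would again use that the original trajectory restricted to the boundaries coincides with the down-sample trajectory: uniform asymptotic stability of \eqref{downsample_closed_loop_LIP} gives $x^{\textrm{com}}[mN_{d}]\to0$ as $m\to\infty$ whenever $\|x^{\textrm{com}}[0]\|$ is small enough. Writing an arbitrary index as $k=mN_{d}+j$ with $0\leq j<N_{d}$ and noting that $m\to\infty$ as $k\to\infty$, the intermediate-state bound $\|x^{\textrm{com}}[k]\|\leq\gamma\,\|x^{\textrm{com}}[mN_{d}]\|$ sandwiches the entire trajectory to zero. Combining stability and attractivity then yields asymptotic stability of the origin for \eqref{closed_loop_LIP}.

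I expect the main obstacle to be the bookkeeping around the \emph{domain of validity} of the Lipschitz estimate and the \emph{uniformity in $m$}: because the MPC law $\pi_{j}(mN_{d},\cdot)$ is explicitly time-varying (through $\zeta$ and the desired trajectory $d^{\textrm{com}}$), the constant $\gamma$ must be assembled from the \emph{uniform} bound $\rho$ of Lemma \ref{Lipshitz_continuity_lemma}, and the neighborhood on which all boundary states remain admissible must be secured \emph{before} the intermediate bound can be applied along the whole trajectory. Resolving this cleanly is exactly what the uniform-stability step accomplishes, so the two halves of the uniform asymptotic stability hypothesis are consumed in a definite order — uniform stability first (to trap the boundary states inside the Lipschitz neighborhood), then attractivity.
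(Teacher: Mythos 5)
Your proof is correct and takes essentially the same route as the paper's: you bound the intra-domain states linearly in the preceding boundary state via Lemma \ref{Lipshitz_continuity_lemma} and the uniform bound $\rho$ (your $\gamma$ is exactly the paper's constant $L=\max_{j}L_{j}$), then transfer the down-sample stability and attractivity to the whole trajectory through the decomposition $k=mN_{d}+j$. The only cosmetic difference is that the paper packages uniform asymptotic stability of \eqref{downsample_closed_loop_LIP} as a class $\mathcal{KL}$ bound $\|x^{\textrm{com}}[mN_{d}]\|\leq\beta(\|x^{\textrm{com}}[0]\|,mN_{d})$ and multiplies it by $L$, whereas you invoke the $\epsilon$--$\delta$ definition plus attractivity directly; your explicit bookkeeping of the Lipschitz neighborhood is, if anything, slightly more careful than the paper's.
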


\begin{proof}
Since the origin is uniformly asymptotically stable for the down-sample system \eqref{downsample_closed_loop_LIP}, there is a class $\mathcal{KL}$ function $\beta$ such that
\begin{equation}\label{class_KL}
\left\|x^{\textrm{com}}[mN_{d}]\right\|\leq\beta\left(\left\|x^{\textrm{com}}[0]\right\|,mN_{d}\|\right)
\end{equation}
for all $m\in\Integer$ and every initial condition in an open neighborhood of the origin. From \eqref{closed_loop_LIP}, norm properties, Lemma \ref{Lipshitz_continuity_lemma}, and \eqref{class_KL}, one can conclude that
\begin{alignat}{4}
&\left\|x^{\textrm{com}}[mN_{d}+j]\right\|&&\leq\|A_{d}^{j}\|\left\|x^{\textrm{com}}[mN_{d}]\right\|\nonumber\\
& &&+\sum_{\ell=0}^{j-1}\|A_{d}^{j-1-\ell}\|\|B_{d}\|\,\rho\left\|x^{\textrm{com}}[mN_{d}]\right\|\nonumber\\
& && \leq L_{j}\,\beta\left(\left\|x^{\textrm{com}}[0]\right\|,mN_{d}\right), \label{state_norm_ineq_01}
\end{alignat}
where $L_{j}:=\|A_{d}^{j}\|+\rho\sum_{\ell=0}^{j-1}\|A_{d}^{j-1-\ell}\|\|B_{d}\|$ for $j=0,1,\cdots,N_{d}-1$. By defining, $L:=\max\{L_{j}\,|\,0\leq{j}\leq{}N_{d}-1\}$, inequality \eqref{state_norm_ineq_01} becomes
\begin{equation}\label{state_norm_ineq_02}
\left\|x^{\textrm{com}}[mN_{d}+j]\right\|\leq{}L\,\beta\left(\left\|x^{\textrm{com}}[0]\right\|,mN_{d}\right).
\end{equation}
For a given $\epsilon>0$, one can choose $\delta>0$ such that $L\,\beta(\delta,0)<\epsilon$. Then, from \eqref{state_norm_ineq_02} and properties of class $\mathcal{KL}$ functions, $\|x^{\textrm{com}}[mN_{d}+j]\|\leq{}L\,\beta(\|x^{\textrm{com}}[0]\|,0)<\epsilon$ for every $\|x^{\textrm{com}}[0]\|<\delta$ and all $m\in\Integer$ and $j=0,1,\cdots,N_{d}-1$ which concludes the stability. From \eqref{state_norm_ineq_02}, $\lim_{m\rightarrow\infty}\|x^{\textrm{com}}[mN_{d}+j]\|=0$ for all $\|x^{\textrm{com}}[0]\|<\delta$ which completes the proof of asymptotic stability.
\end{proof}

\begin{remark}
Analogous to the Poincar\'e sections analysis that translates the stability of periodic trajectories into that of a fixed point for a discrete-time system refereed to as the Poincar\'e return map \cite{Grizzle_Asymptotically_Stable_Walking_IEEE_TAC,Hamed_Gregg_IEEE_TAC,Veer_Poulakakis}, Theorem \ref{stability_theorem} translates the stability of the target point for the closed-loop discrete-time system into that of the down-sample system. Similar to the Poincar\'e return map, the state of the down-sample system is updated in an event-based manner.
\end{remark}

%%%%%%%%%%%%%%%%%%%%%%%%%%%%%%%%%%%%%%%%%%%%%%%%%%%%%%%%%%%%%%%%%%%%%%%%%%%%%%%%

\section{QP-BASED HZD CONTROLLERS}
\label{QP-BASED VIRTUAL CONSTRAINTS FOR TRACKING AND FEASIBILITY}
\vspace{-0.2em}

The objective of this section is to develop the low-level control algorithm based on QP and the virtual constraints approach. In particular, we would like to develop a nonlinear control algorithm to track the optimal COM trajectory that is generated by the higher-level MPC for the current continuous-time domain. The low-level controller also imposes the swing legs to follow an appropriate path to land at the desired footholds. The trajectory tracking problem is formulated via the virtual constraints approach. Since the higher-level LIP model only considers the feasibility of the net force $F_{\textrm{net}}$, the lower-level controller formulates the I-O linearization problem as a QP that addresses the feasibility of all individual contact forces $F_{\ell}$ for $\ell\in\mathcal{C}$ while tracking the LIP trajectories.

\noindent\textbf{Virtual Constraints:} In this paper, we define a set of time-varying and holonomic virtual constraints  as follows
\begin{equation}\label{virtual_constraints}
y(x,t):=h(q,t):=h_{0}(q)-h_{d}\left(s,\alpha\right),
\end{equation}
in which $h_{0}(q)$ denotes a set of holonomic quantities to be controlled. In addition, $h_{d}(s,\alpha)$ represents the desired evolution of the controlled variables on the gait in terms of the phasing variable $s$. Here, $s:=\frac{t-t^{+}}{N_{d}\,T_{d}}$
denotes the phasing variable with $t^{+}$ being the initial time for the current domain and $N_{d}T_{d}$ representing an estimated elapsed time for the domain. The desired trajectory $h_{d}(s,\alpha)$ is taken as a B\'ezier polynomial with a  coefficient matrix $\alpha$. During the quadruple-contact domains (i.e., four legs on the ground), we choose $h_{0}(q)\in\Real^{6}$ as the roll, pitch, and yaw angles of the torso together with the COM positions. The idea is to regulate the absolute orientation of the robot while imposing the COM coordinates to follow the optimal COM trajectory generated by the higher-level MPC. Here, the coefficient matrix $\alpha$ can be chosen via least squares at the beginning of each domain such that $h_{d}(s,\alpha)$ has the best fit to the optimal COM trajectory over $N_{d}$ samples. For double- and triple-contact domains, $h_{0}(q)$ is augmented with the Cartesian coordinates of the swing leg ends for foot placement. The idea is to follow a desired foot trajectory in the workspace starting from the previous foothold and ending at the next preplanned foothold. This makes the output function $12$- and $9$-dimensional for the double- and triple-contact domains, respectively. To control the configuration of the manipulator, we augment $h_{0}(q)$ and $h_{d}(s,\alpha)$ by the Cartesian coordinates of the end-effector and its desired trajectory in the workspace, respectively.

\noindent\textbf{QP-Based I-O Linearization:} Differentiating the output functions \eqref{virtual_constraints} along the trajectories of the full-order model \eqref{state_eq} results in the following output dynamics
\begin{alignat}{4}
&\ddot{y}&&=\textrm{L}_{g}\textrm{L}_{f}y(x,t)\,\tau+\textrm{L}_{w}\textrm{L}_{f}y(x,t)\,F+\textrm{L}_{f}^{2}y(x,t)+\frac{\partial^{2}y}{\partial{}t^{2}}(x,t)\nonumber\\
& &&=-K_{D}\dot{y} - K_{P}\,y,\label{output_dynamics}
\end{alignat}
in which $\textrm{L}_{g}\textrm{L}_{f}y:=\frac{\partial h}{\partial q}D^{-1}\Upsilon$, $\textrm{L}_{w}\textrm{L}_{f}y:=\frac{\partial h}{\partial q}D^{-1}J$, and $\textrm{L}_{f}^{2}y:=-\frac{\partial h}{\partial q}D^{-1}H+\frac{\partial}{\partial q}(\frac{\partial h}{\partial q}\dot{q})\,\dot{q}$ are Lie derivatives with $J$ being the total Jacobian matrix which satisfies $J^\top{}F=\sum_{\ell\in\mathcal{C}}J_{\ell}^\top{}F_{\ell}$. Moreover, $K_{P}$ and $K_{D}$ are positive definite matrices, and hence, $(y,\dot{y})=(0,0)$ is exponentially stable for the output dynamics \eqref{output_dynamics}. To compute the required torques that drive the outputs to the zero, one would need to solve for $\tau$ from \eqref{output_dynamics}. However, since the contact force measurements are not available for the studied robot, one would need to estimate the contact forces $F=\textrm{col}\{F_{\ell}|\,\ell\in\mathcal{C}\}$. We assume a rigid contact model with the walking surface. The robustness of the controller against uncertainty in the contact models will be investiagted in Section \ref{NUMERICAL SIMULATIONS}. This assumption makes the leg ends acceleration zero which can be expressed as follows:
\begin{equation}\label{acceleration_stance_leg_01}
J(q)\,\ddot{q}+\frac{\partial}{\partial q}\left(J(q)\,\dot{q}\right)\dot{q}=0.
\end{equation}
Next let us suppose that $p^{\textrm{st}}_{\ell}(q)\in\Real^{3}$ for $\ell\in\mathcal{C}$ denotes the Cartesian coordinates of the contacting foot $\ell$. The augmented stance feet coordinates can also be defined as $p^{\textrm{st}}(q):=\textrm{col}\{p^{\textrm{st}}_{\ell}|\,\ell\in\mathcal{C}\}$ which has the property $\frac{\partial p^{\textrm{st}}}{\partial q}(q)=J(q)$. Equation \eqref{acceleration_stance_leg_01} together with \eqref{state_eq} then yields
\begin{equation}\label{acceleration_stance_leg_02}
\ddot{p}^{\textrm{st}}=\textrm{L}_{g}\textrm{L}_{f}p^{\textrm{st}}(x)\,\tau+\textrm{L}_{w}\textrm{L}_{f}p^{\textrm{st}}(x)\,F+\textrm{L}_{f}^{2}p^{\textrm{st}}(x)=0,
\end{equation}
where $\textrm{L}_{g}\textrm{L}_{f}p^{\textrm{st}}:=J\,D^{-1}\Upsilon$, $\textrm{L}_{w}\textrm{L}_{f}p^{\textrm{st}}:=J\,D^{-1}J^\top$, and $\textrm{L}_{f}^{2}p^{\textrm{st}}:=-J\,D^{-1}H+\frac{\partial}{\partial q}(J\,\dot{q})\,\dot{q}$. Now, we need to look for the values of $(\tau,F)$ that satisfy \eqref{output_dynamics} and \eqref{acceleration_stance_leg_02} with contact forces being in the friction cone, that is, $F_{\ell}\in\mathcal{FC}$ for all $\ell\in\mathcal{C}$. For this purpose, we set up the following real-time QP
\begin{alignat}{8}
& &&\min_{(\tau,F,\omega)}  &&\quad \frac{1}{2}\|\tau\|_{2}^{2}+\frac{\gamma}{2}\|\omega\|^{2}_{2}\nonumber\\
& &&\,\,\textrm{s.t.} && \textrm{L}_{g}\textrm{L}_{f}y\,\tau+\textrm{L}_{w}\textrm{L}_{f}y\,F+\textrm{L}_{f}^{2}y+\frac{\partial^2y}{\partial t^2}+\omega=v_{\textrm{PD}}(y,\dot{y})\nonumber\\
& && && \textrm{L}_{g}\textrm{L}_{f}p^{\textrm{st}}\,\tau+\textrm{L}_{w}\textrm{L}_{f}p^{\textrm{st}}\,F+\textrm{L}_{f}^{2}p^{\textrm{st}}=0\nonumber\\
& && && F_{\ell}\in\mathcal{FC},\quad\ell\in\mathcal{C}\nonumber\\
& && && \tau_{\min}\leq\tau\leq\tau_{\max},\label{low_levl_QP}
\end{alignat}
in which $v_{\textrm{PD}}(y,\dot{y}):=-K_{P}\,y-K_{D}\,\dot{y}$ is the PD action. The equality constraints for the QP are set up based on the output dynamics \eqref{output_dynamics} as well as the stance foot accelerations assumption \eqref{acceleration_stance_leg_02}. In case the decoupling matrix $\textrm{L}_{g}\textrm{L}_{f}y$ becomes singular, there is a possibility for the infeasibility of  \eqref{output_dynamics} and \eqref{acceleration_stance_leg_02}. To tackle this issue, we introduce a defect variable $\omega$ in the equality constraint of QP \eqref{low_levl_QP} that corresponds to \eqref{output_dynamics}. To minimize the effect of the defect variable, we then add a quadratic term $\frac{\gamma}{2}\|\omega\|_{2}^{2}$ to the cost function of the QP to ensure that the $2$-norm of the defect variable is as small as possible. Here, $\gamma>0$ is a weighting factor. The other term in the cost function tries to find the minimum $2$-norm (minimum power) torques that satisfy the equality and inequality constraints. Furthermore, $\tau_{\min}$ and $\tau_{\max}$ are the admissible lower and upper bounds on the torques, respectively. The optimal solution of \eqref{low_levl_QP} (i.e., $\tau$) is finally employed to the full-order system.

%%%%%%%%%%%%%%%%%%%%%%%%%%%%%%%%%%%%%%%%%%%%%%%%%%%%%%%%%%%%%%%%%%%%%%%%%%%%%%%%

\section{NUMERICAL SIMULATIONS}
\label{NUMERICAL SIMULATIONS}
\vspace{-0.2em}

The objective of this section is to numerically verify the theoretical results of the paper. To demonstrate the power and robustness of the proposed hierarchical control algorithm, we consider the full-order dynamical model of the composite robot in two different simulations. The first simulation is implemented in MATLAB/Simulnik with LuGre contact models \cite{LuGre_model} (compliant contact model), whereas the second one utilizes RaiSim \cite{RAISIM} (rigid contact model). We consider trot gaits with five different directions (i.e., forward, backward, sideways, diagonal, and in-place) whose graphs $\mathcal{G}$ consist of $M=20$ continuous-time domains (see Fig. \ref{HierachicalControl_MPC_illustration}). The first and last domains of the graph are assumed to be quadruple-contact domains for starting and stopping the gait, whereas the intermediate domains are supposed to be double-contact domains. We choose the sampling time to discretize the LIP dynamics as $T_{d}=80$ (ms) with $N_{d}=4$ grids per each domain. The control horizon for the event-based MPC is chosen as $N=nN_{d}=8$ which considers two domains ahead. The other parameters are taken as $P=10^{3}\,I_{4\times4}$, $Q=I_{4\time4}$, $R=I_{2\times2}$,a nd $\hat{R}=0.01I$ that stabilize the target point for the down-sample discrete-time system \eqref{downsample_closed_loop_LIP}. With the height of the COM being $0.5$ (m) and the friction coefficient $\mu=0.4$, the higher-level QP (i.e., MPC) is solved in an event-based manner, that is  approximately every $N_{d}T_{d}=0.32$ seconds. Since the dimension of $\lambda_{k+i|k}$ changes per domain, the number of decision variables for the MPC depends on the domain number. In particular, it can be shown that for $\zeta\in\{2,\cdots,M-2\}$, $\zeta\in\{1,M-1\}$, and $\zeta=M$, the higher-level QP has $64$, $72$, and $80$ decision variables, respectively. In MATLAB/Simulink, we make use of the ECOS QP \cite{ecos} solver, whereas in RaiSim, we use qpSWIFT \cite{qpSWIFT}. The lower-level QP for I-O linearization has $37$ decision variables for both double- and quadruple-contact domains, which is approximately $50\%$ of the number of decisions variables used for the higher-level QP. The lower-level QP is solved with the weighting factor $\gamma=10^7$ at every $1$ (ms). All state components of the robot, except the absolute Cartesian coordinates of the torso (i.e., $p_{b})$, are measurable by an inertial measurement unit as well as encoders. Since some components of the controlled variables, $h_{0}(q)$, are defined in the workspace, one would only need to estimate $p_{b}$. Analogous to the approach of \cite{KF_mitcheetah3}, we utilize a Kalman filter for this purpose. Figure \ref{reduced_order_steering} depicts the evolution of the COM and COP for the forward and diagonal trot gaits of the closed-loop discrete-time dynamics \eqref{closed_loop_LIP} with the step lengths of $(10,0)$ (cm) and $(7,4)$ (cm) in $\Real^{2}$, respectively, versus the discrete-time $k$. Convergence to the target state is clear. Figure \ref{full_order_simulations} illustrates the evolution of the output functions and torque inputs (i.e., before the gear ratio) for the full-order dynamical model of the forward trot gait with the maximum speed of $0.3$ (m/s) in two different simulations. Regardless of the difference in the contact models, the robot travels in a stable and robust manner towards the target. Figure \ref{GRF_plot_delayedsystem} depicts the GRF of one of the contacting legs with the walking surface. To demonstrate the robustness of the proposed control algorithm against the control frequency as well as time delays, we next assume that the low-level control frequency is reduced from 1 kHz to 500 Hz while there is a latency of 2 (ms) in solving QPs. Figure \ref{GRF_plot_delayedsystem} illustrates the virtual constraints profile for this case. It is clear that the outputs are still driven to zero while being in a feasible range. Figure \ref{COM_traj_foothholds} finally depicts the plot of the COM trajectory for the full-order model as well as the footholds in the $xy$-plane for different gaits. Animations of these simulations and other gaits can be found at \cite{YouTube_EventBasedMPC_QPVirtualConstraints}.

\begin{figure}[t!]
\centering
\vspace{0.1em}
\includegraphics[width=\linewidth]{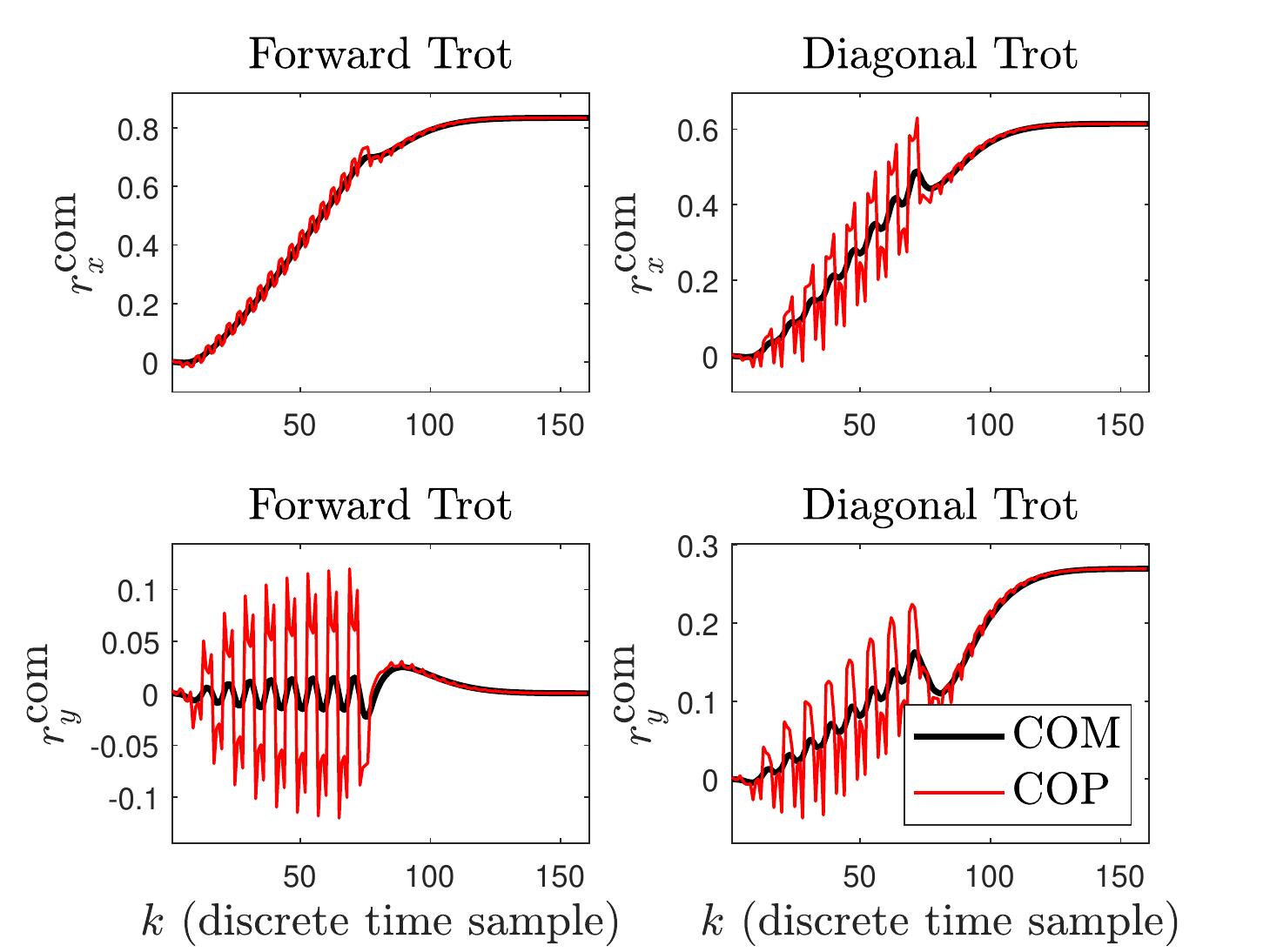}
\caption{Plot of the COM and COP trajectories for the forward and diagonal trot gaits of the closed-loop reduced-order system \eqref{closed_loop_LIP} versus $k$. Convergence to the target point is clear.}
\vspace{-1.0em}
\label{reduced_order_steering}
\end{figure}

\begin{figure}[t!]
\centering
\subfloat{\includegraphics[width=1.65in]{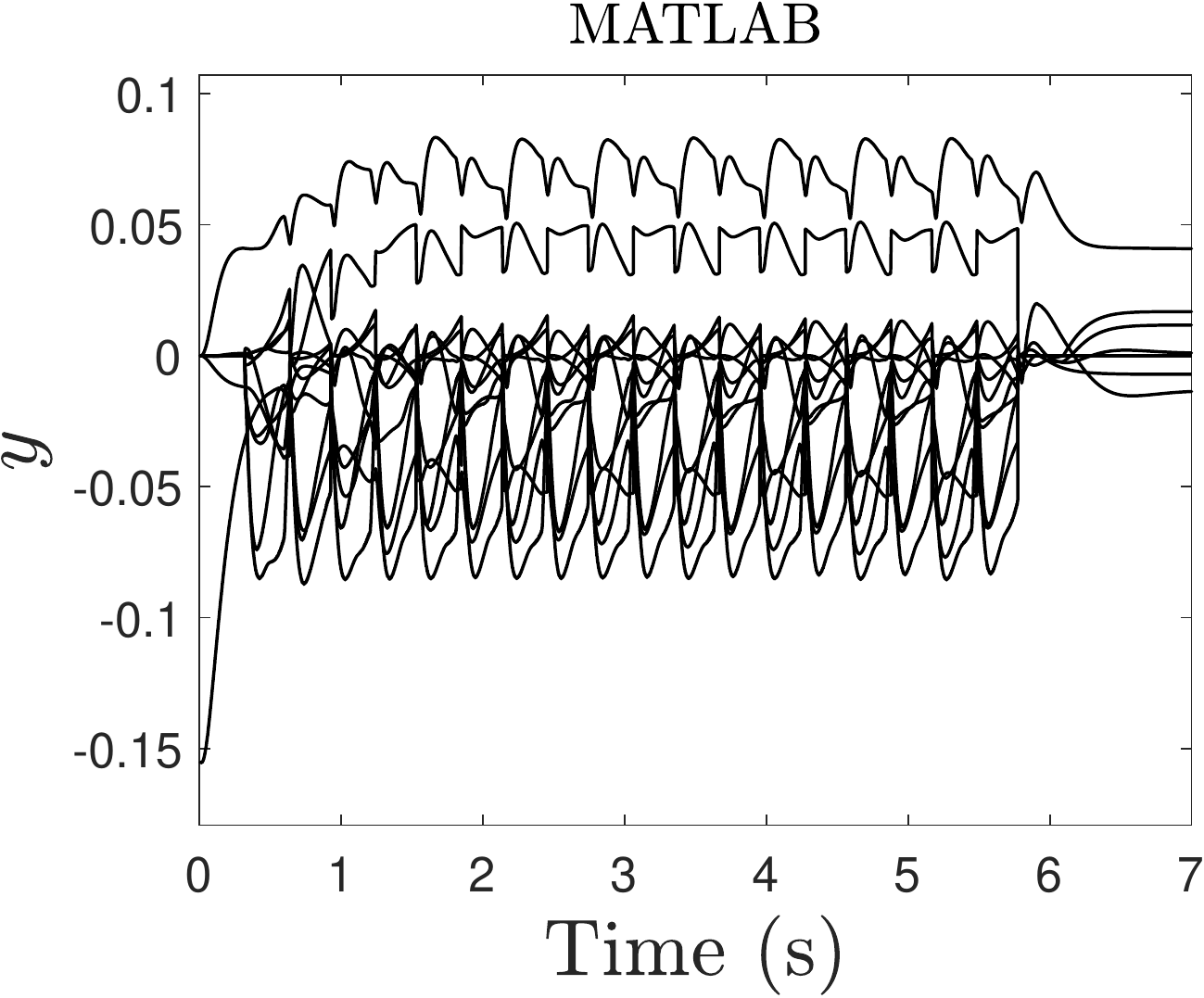}\label{}}\,
\subfloat{\includegraphics[width=1.65in]{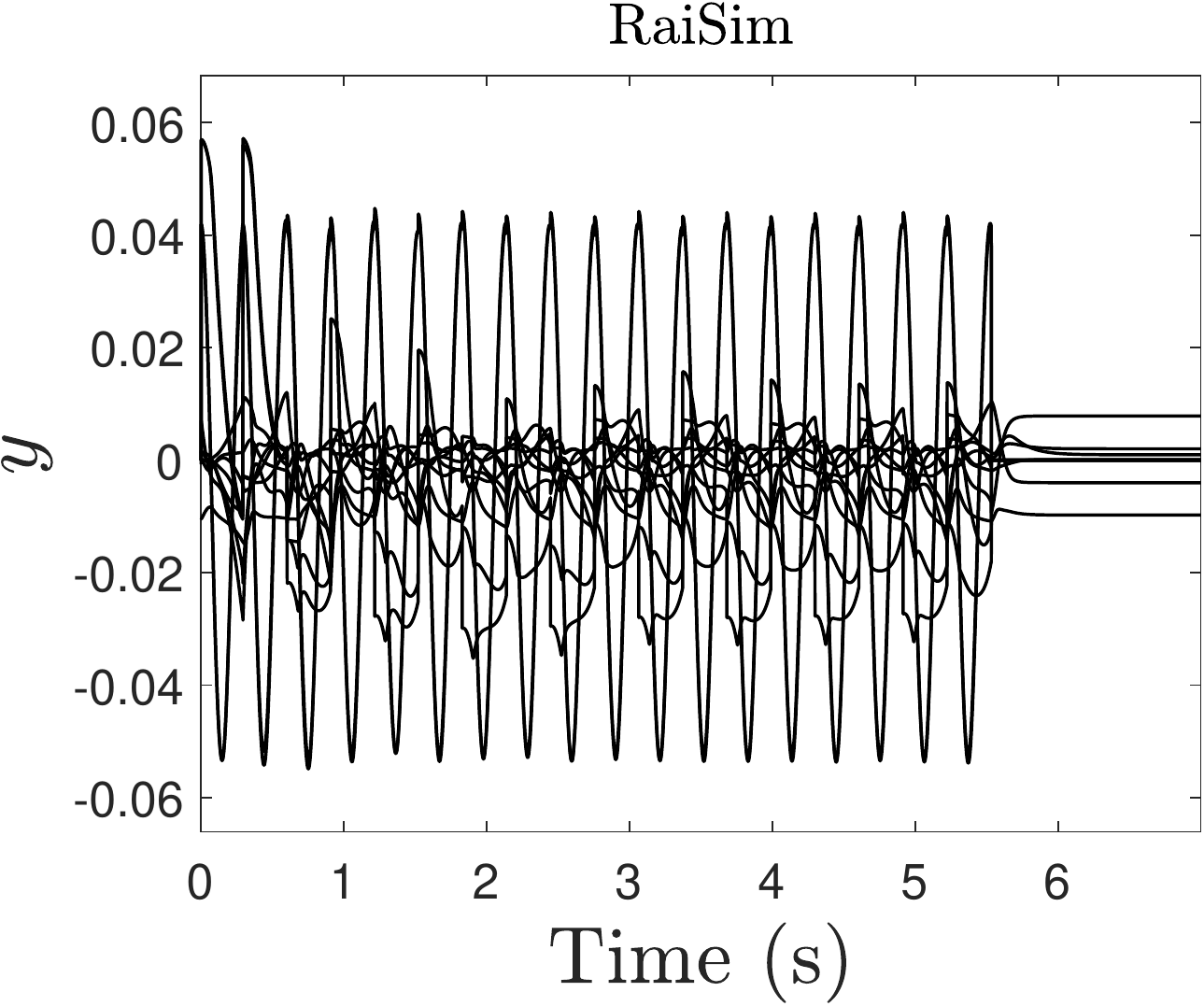}\label{}}\\
\vspace{-0.2em}
\subfloat{\includegraphics[width=1.65in]{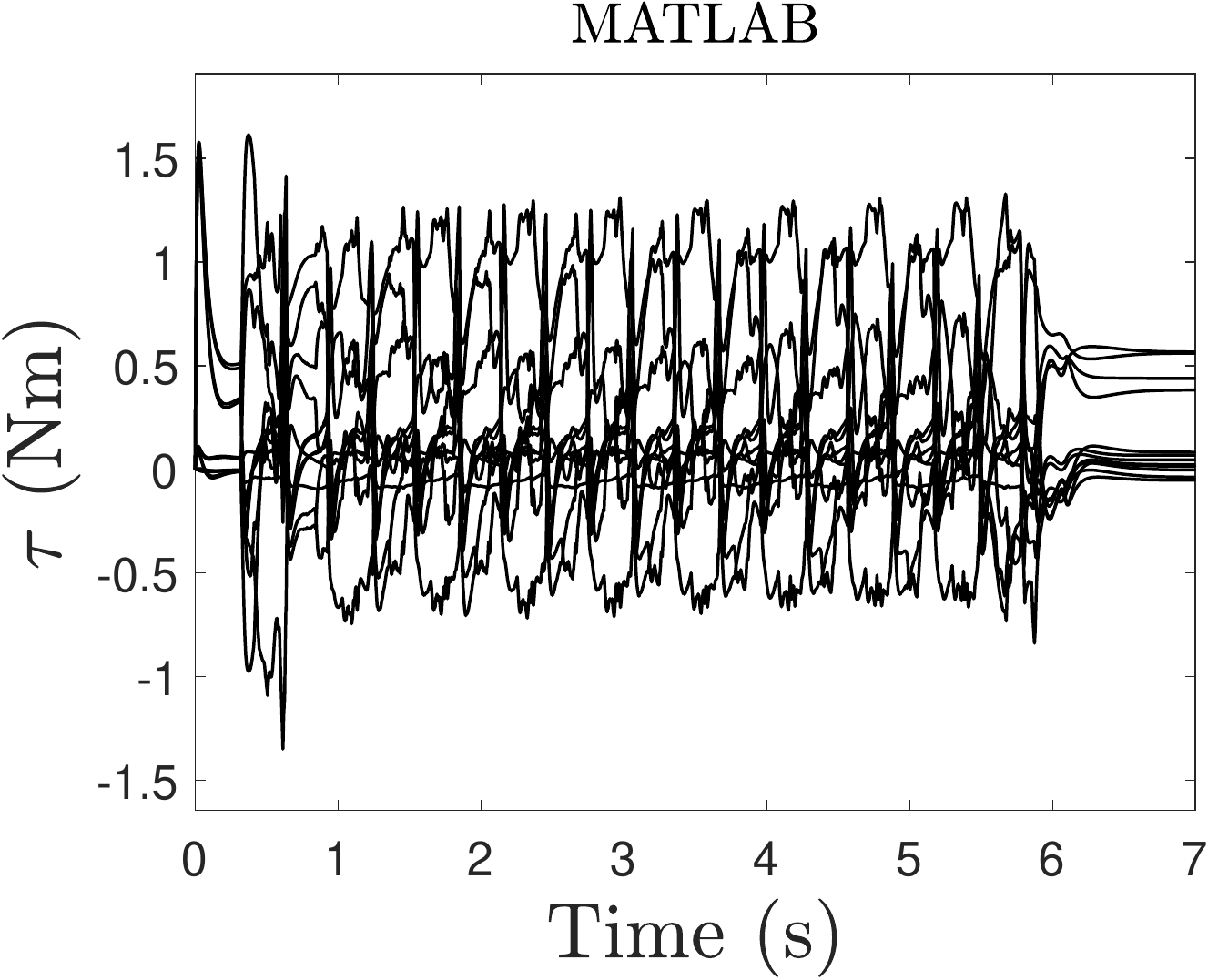}\label{}}\,
\subfloat{\includegraphics[width=1.65in]{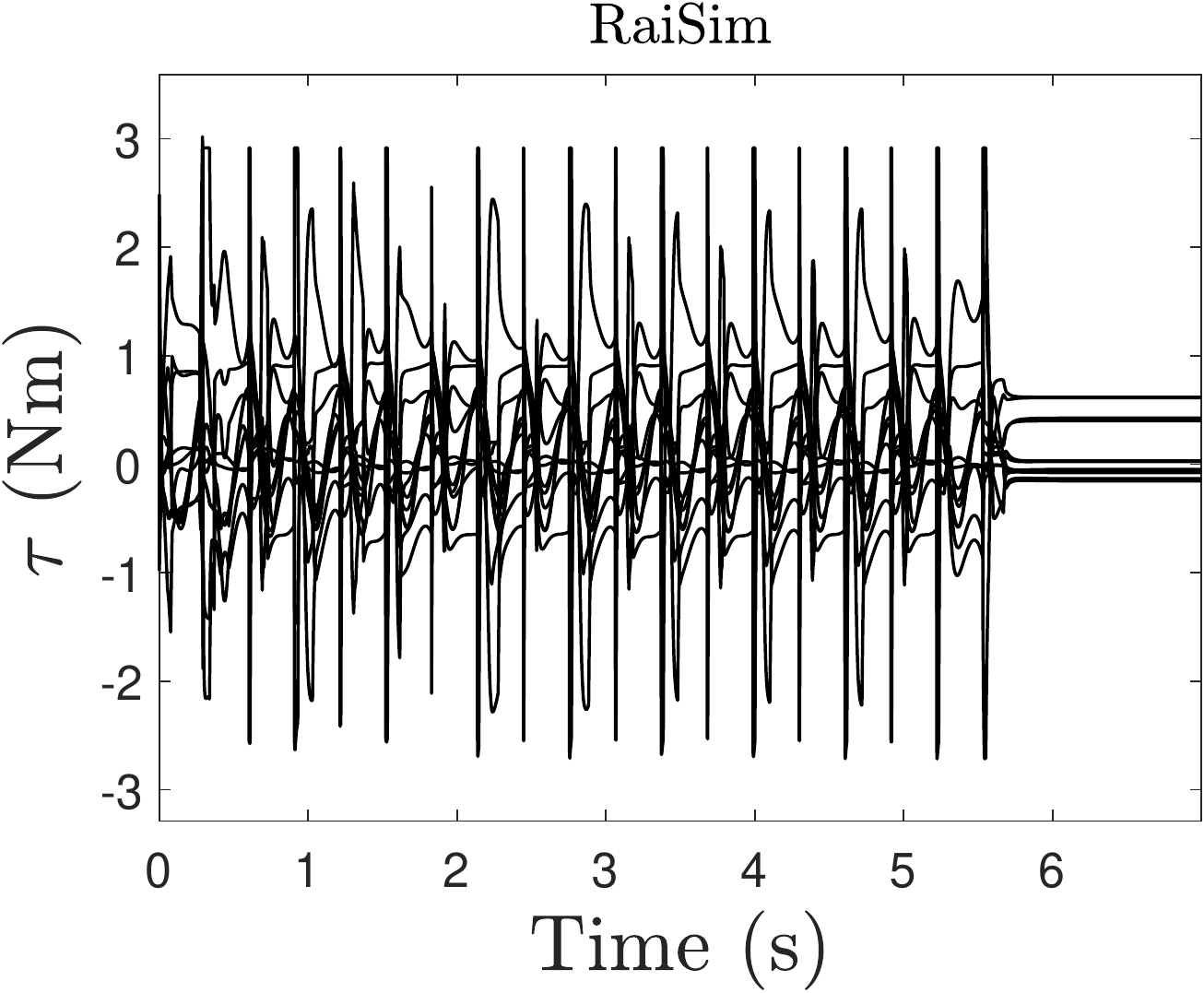}\label{}}
\caption{Plot of the virtual constraints and torque inputs for forward trot gait of the full-order closed-loop system in MATLAB/Simulink and RaiSim with $M=20$ domains.}
\vspace{-1.75em}
\label{full_order_simulations}
\end{figure}

\begin{figure}[t!]
\centering
\vspace{0.1em}
\subfloat{\includegraphics[width=1.7in]{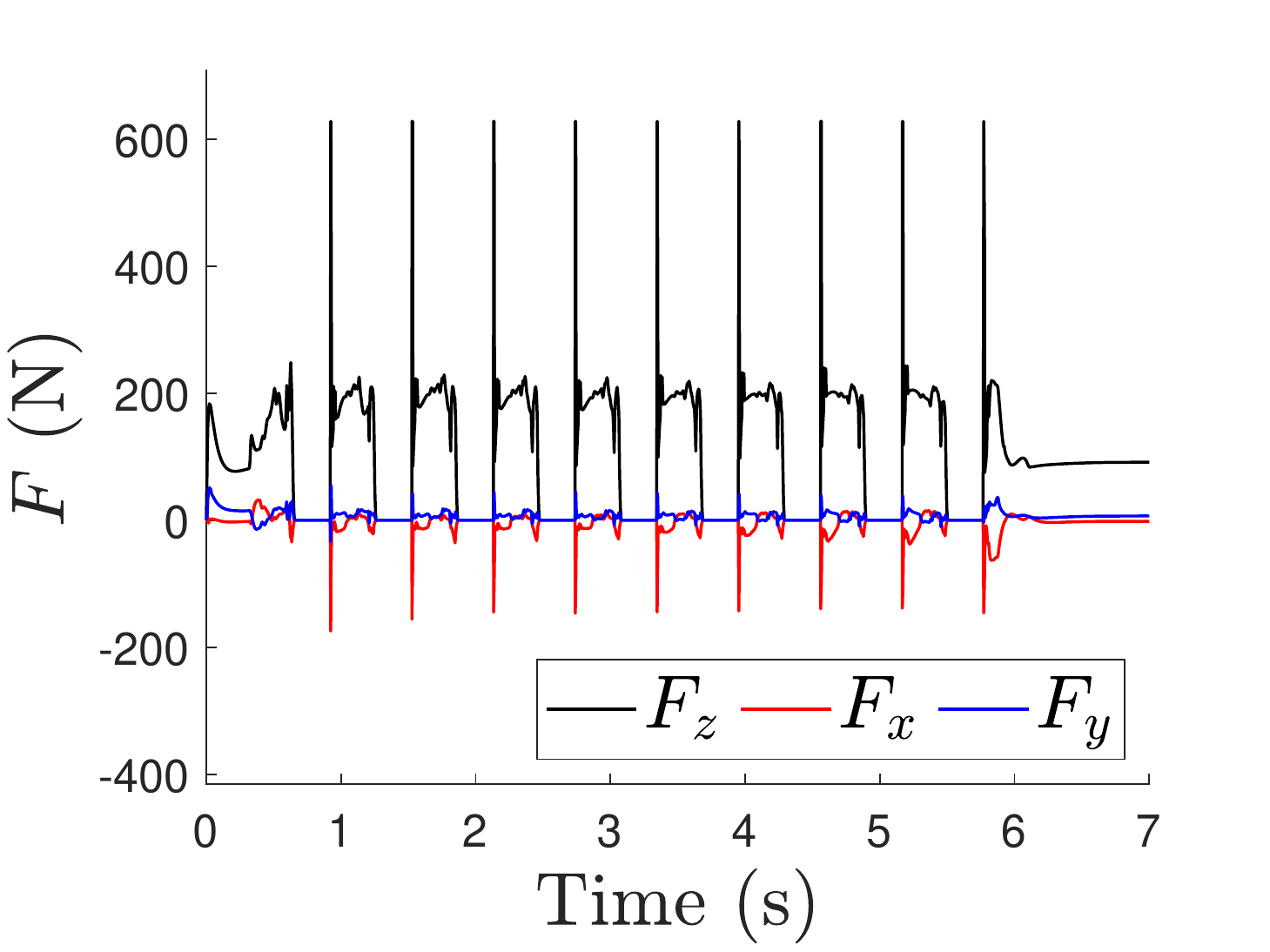}\label{}}
    \subfloat{\includegraphics[width=1.7in]{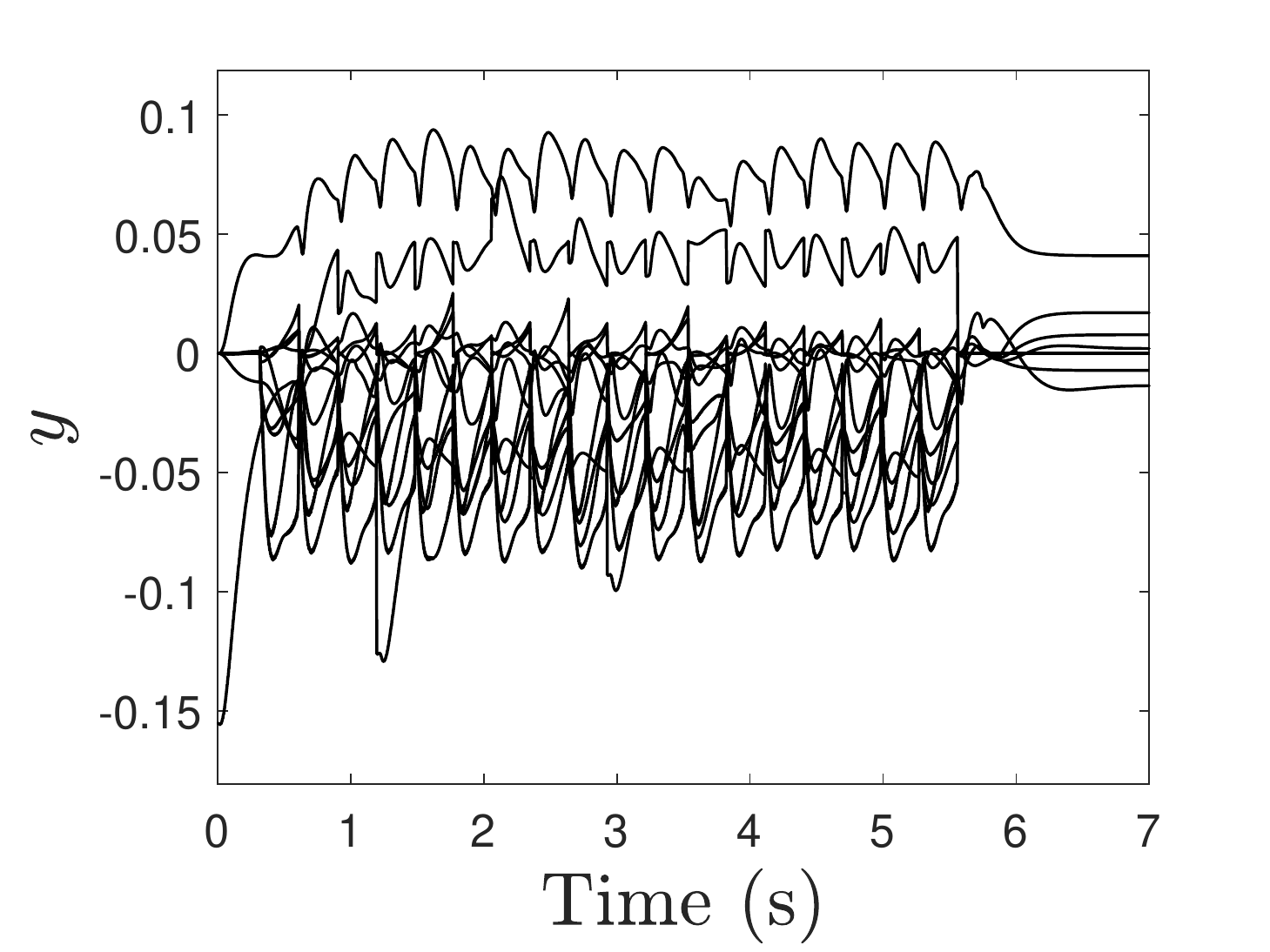}\label{}}
\caption{Left: Plot of the GRF on the right front leg using the LuGre contact model. Right: Plot of the virtual constraints when the low-level QP-based nonlinear control is solved at 500 Hz with a delay of 2 (ms).}
\vspace{-1.5em}
\label{GRF_plot_delayedsystem}
\end{figure}

\begin{figure}[t!]
\centering
\subfloat{\includegraphics[width=1.7in]{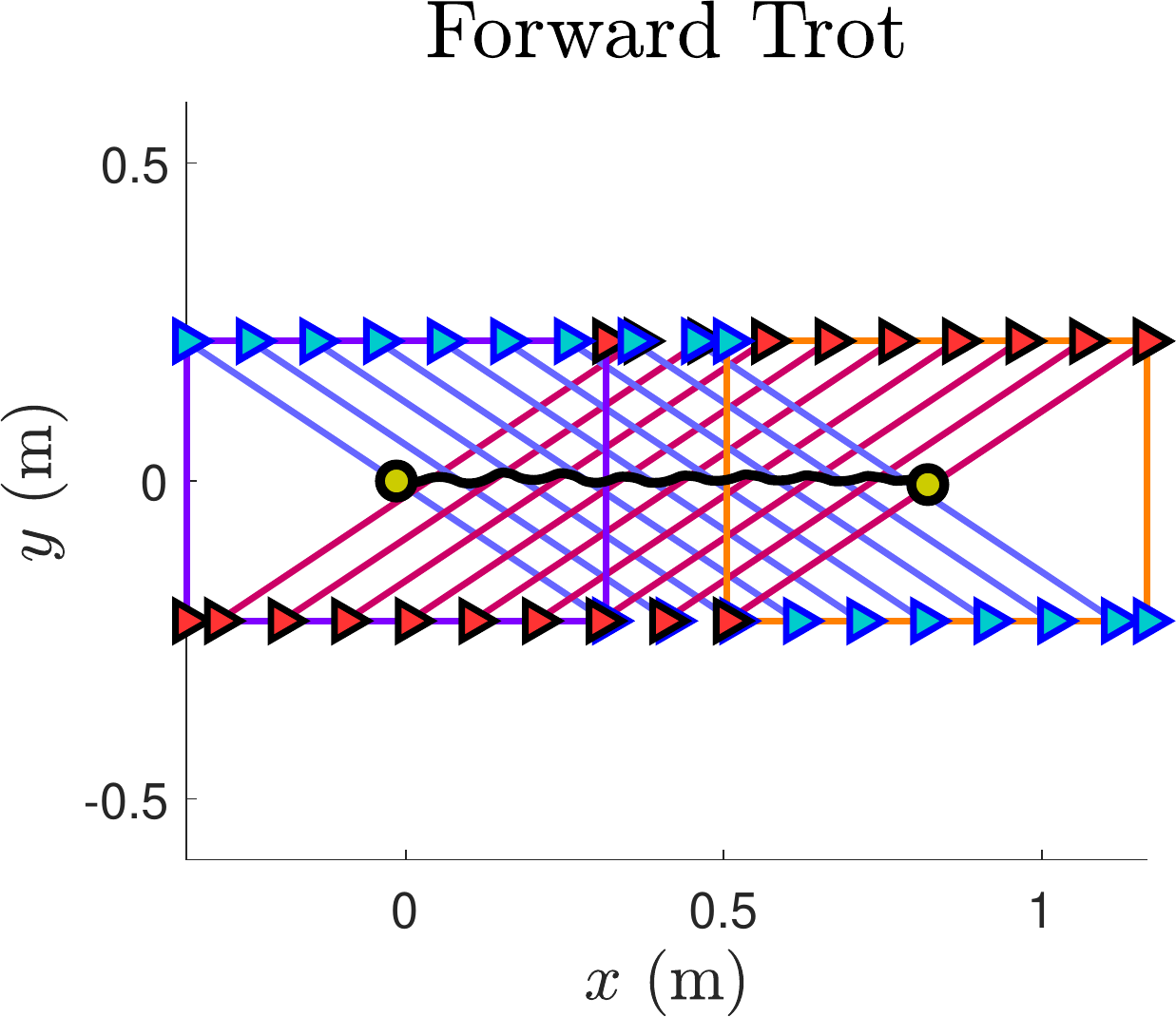}\label{}}
\subfloat{\includegraphics[width=1.7in]{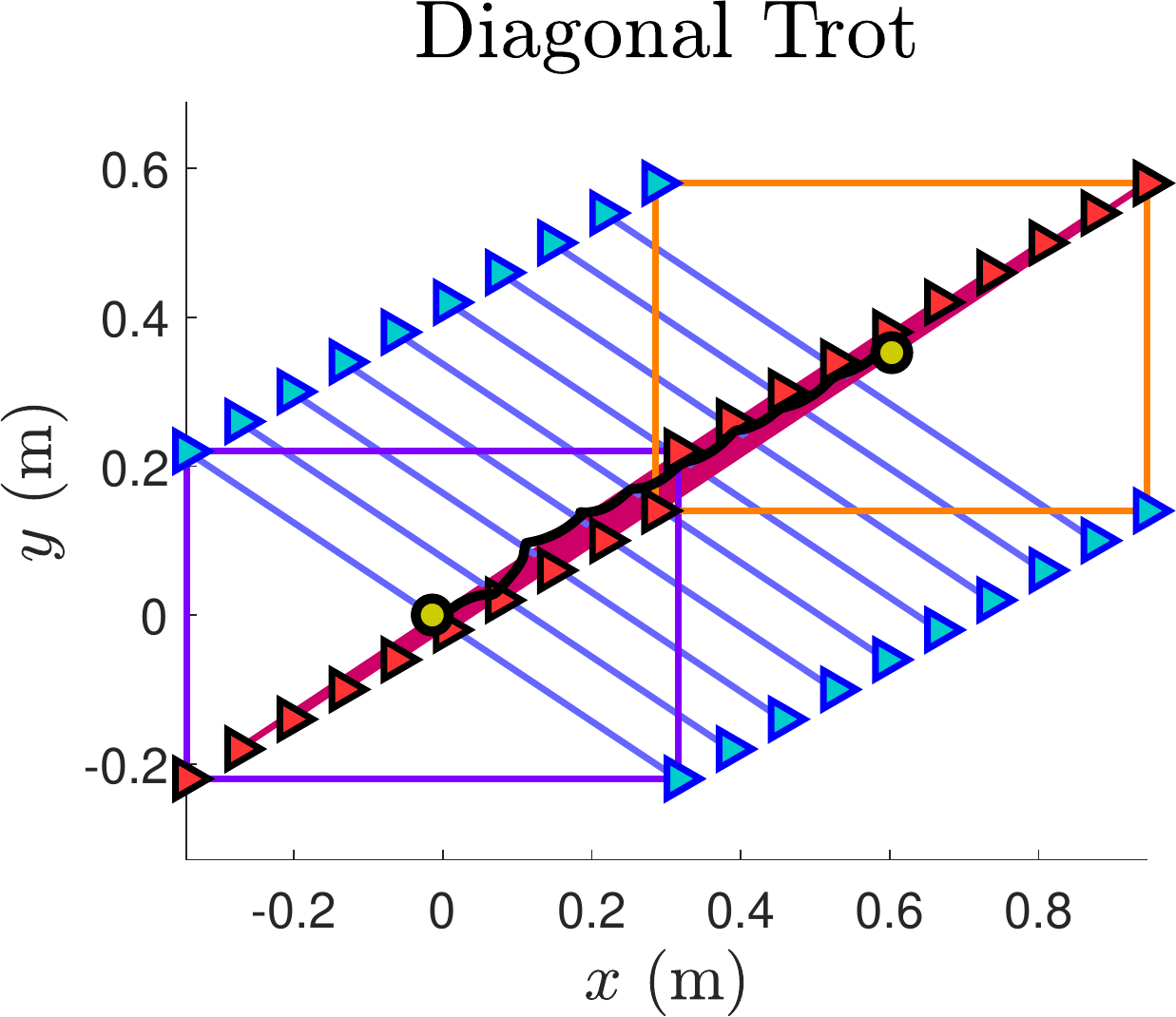}\label{}}\\
\caption{Plot of the COM trajectory and footholds in the $xy$-plane for different trot gaits. The solid trajectory and circles denote the the COM trajectory and initial and final COM positions, respectively. The lines represent the support polygon $\mathcal{U}_{\zeta}$ for each domain. The triangles illustrate the desired footholds.}
\vspace{-1.5em}
\label{COM_traj_foothholds}
\end{figure}

%%%%%%%%%%%%%%%%%%%%%%%%%%%%%%%%%%%%%%%%%%%%%%%%%%%%%%%%%%%%%%%%%%%%%%%%%%%%%%%%

\vspace{-0.5em}
\section{CONCLUSION}
\label{CONCLUSION}
\vspace{-0.2em}

This paper proposed a hierarchical control scheme for quadrupedal locomotion based on convex optimization, event-based MPC, and virtual constraints. At the higher level of the control scheme, the event-based MPC computes the optimal trajectory for the COM of the LIP model to steer the robot from an initial state to a final state while the net GRF is feasible. The paper addressed the asymptotic stability of the target point under the proposed MPC approach. It was formally shown that one would \textit{not} need to employ the MPC at every time sample to stabilize the locomotion. The MPC can instead be employed in an event-based manner at the beginning of each domain to stabilize the target point. This significantly reduces the complexity for real-time implementation of MPC techniques. The lower-level controller then fills the gap between the reduced- and full-order dynamics. In particular, we formulated a QP-based I-O linearization approach to impose the full-order dynamics to follow the optimal COM trajectory of the reduced-order model while tracking the desired footholds with all individual GRFs being in the friction cone. The power and robustness of the proposed hierarchical control scheme were demonstrated via numerical simulations of a full-order dynamical model for a 22 DOF quadrupedal robot with different contact models. The framework can systematically address a range of aperiodic locomotion patterns such as forward, backward, lateral, diagonal, and in-place gaits with start and stop conditions.

For future work, we will experimentally investigate the proposed control scheme on the Vision 60 platform. We will also generalize the framework to develop distributed controllers for motion control of collaborative quadrupedal robots.

%%%%%%%%%%%%%%%%%%%%%%%%%%%%%%%%%%%%%%%%%%%%%%%%%%%%%%%%%%%%%%%%%%%%%%%%%%%%%%%%

%\vspace{-0.5em}
\bibliographystyle{IEEEtran}
\bibliography{references}

\end{document}